\title{Formal Verification of Graph Convolutional Networks with Uncertain Node Features and Uncertain Graph Structure}
\newcommand{\tumaddr}{%
       \addr
       School of Computation, Information and Technology \\
       Technical University of Munich, Germany}
\author{\name Tobias Ladner \email tobias.ladner@tum.de \vspace{-\baselineskip}
\AND \name Michael Eichelbeck \email michael.eichelbeck@tum.de \vspace{-\baselineskip}
\AND \name Matthias Althoff \email althoff@tum.de \\ \tumaddr
}
\begingroup\color{lightgray},
\begingroup\color{lightgray},
\newcolumntype{L}{>{$}l<{$}} 
\newcolumntype{C}{>{$}c<{$}} 
\newcolumntype{R}{>{$}r<{$}} 
\pgfplotsset{compat=1.18}
\newcommand{%
    \tikzsetnextfilename{}%
    \input{.tikz}%
}[1]{%
    \tikzsetnextfilename{#1}%
    \input{#1.tikz}%
}
\tikzset{every picture/.style={line width=0.5pt}}
\newcommand{\circled}[1]{%
    \tikzset{external/export next=false}%
    \tikz[baseline=(char.base)]\node[draw,circle, inner sep=1.5pt,](char){\ensuremath{ #1}};}
\newtheorem{proposition}{Proposition}
\newtheorem{example}{Example}
\newtheorem{definition}{Definition}
\crefname{section}{Sec.}{Sec.}
\crefname{subsection}{Sec.}{Sec.}
\crefname{figure}{Fig.}{Fig.}
\crefname{algorithm}{Alg.}{Alg.}
\crefname{table}{Tab.}{Tab.}
\crefname{example}{Example}{Example}
\crefname{definition}{Def.}{Def.}
\crefname{proposition}{Prop.}{Prop.}
\crefname{theorem}{Thm.}{Thm.}
\crefname{lemma}{Lemma}{Lemmas}
\crefname{corollary}{Cor.}{Cor.}
\crefname{assumption}{Assumption}{Assumptions}
\crefname{appendix}{Appendix}{Appendix}
\Crefname{section}{Sec.}{Sec.}
\Crefname{subsection}{Sec.}{Sec.}
\Crefname{figure}{Fig.}{Fig.}
\Crefname{algorithm}{Alg.}{Alg.}
\Crefname{table}{Tab.}{Tab.}
\crefname{example}{Example}{Example}
\Crefname{definition}{Def.}{Def.}
\Crefname{proposition}{Prop.}{Prop.}
\Crefname{theorem}{Thm.}{Thm.}
\Crefname{lemma}{Lemma}{Lemmas}
\Crefname{corollary}{Cor.}{Cor.}
\Crefname{assumption}{Assumption}{Assumptions}
\Crefname{appendix}{Appendix}{Appendix}
\newcommand{\R}[0]{\mathbb{R}}
\newcommand{\N}[0]{\mathbb{N}}
\newcommand{\Onot}[0]{\mathcal{O}}
\newcommand{\cmatrix}[1]{\left[\begin{matrix}
                                   #1
\end{matrix}\right]}
\newcommand{\stackEq}[2]{%
    \setbox0=\hbox{${}\mathrel{\stackon[-1pt]{#2}{\scriptstyle #1\strut}}{}$}
    \xdef\tmpwd{\dimexpr\the\wd0\relax}
    \kern.5\tmpwd\mathclap{\box0}&\kern.5\tmpwd
}
\newcommand{\operator}[3][]{%
    \text{\normalfont\small\texttt{#2}}%
    \IfSubStr{#1}{inline}
    {(#3)}
    {\left(#3\right)}}
\newcommand{\opDiag}[2][]{\operator[#1]{diag}{#2}}
\let\originalleft\left
\let\originalright\right
\renewcommand{\left}{\mathopen{}\mathclose\bgroup\originalleft}
\renewcommand{\right}{\aftergroup\egroup\originalright}
\newcommand{\negphantom}[1]{\settowidth{\dimen0}{#1}\hspace*{-\dimen0}}
\newcommand{\defSet}[3][2]{\left<#3\right>_{\ifthenelse{\isempty{#1}}{#2\,}{\negphantom{\,}#2\phantom{\ }}}\ifthenelse{\isempty{#1}}{}{\negphantom{#2}#1\phantom{\scriptstyle #2}}}
\newcommand{\PZ}[0]{\mathcal{PZ}}
\newcommand{\defPZ}[5][]{\defSet[#1]{PZ}{#2,#3,#4,#5}}
\newcommand{\PZid}[0]{\normalfont{\texttt{\small id}}}
\newcommand{\opQuadMap}[3]{\operator{quadMap}{#1,#2,#3}}
\newcommand{\NN}[0]{\Phi}
\newcommand{\numLayers}[0]{\kappa}
\newcommand{\numNeurons}[0]{n}
\newcommand{\actFun}{\phi}
\newcommand{\inputSet}[0]{\mathcal{X}}
\newcommand{\hiddenSet}[0]{\mathcal{H}}
\newcommand{\hiddenSetExact}[0]{\hiddenSet^*}
\newcommand{\outputSet}[0]{\mathcal{Y}}
\newcommand{\outputSetExact}[0]{\outputSet^*}
\newcommand{\outputSetApprox}[0]{\outputSet_\text{approx}}
\newcommand{\outputSetRed}[0]{\outputSet_\text{red}}
\newcommand{\unsafeSet}[0]{\mathcal{S}}
\newcommand{\opEnclose}[2]{\operator{enclose}{#1,#2}}
\newcommand{\pertRadius}{\epsilon}
\newcommand{\layer}[3]{L_{#1}^{\text{\normalfont #2}}(#3)}
\newcommand{\Graph}{\boldsymbol{\mathcal{G}}}
\newcommand{\defGraph}[2]{\left(#1,\,#2\right)}
\newcommand{\Nodes}{\mathcal{N}}
\newcommand{\Edges}{\mathcal{E}}
\newcommand{\EdgesFixed}{\Edges^*}
\newcommand{\EdgesUncertain}{\widetilde{\Edges}}
\newcommand{\numNodes}{{|\Nodes|}}
\newcommand{\cnode}[1]{\circled{#1}}
\newcommand{\cedge}[2]{\ensuremath{\cnode{#1}{-}\cnode{#2}}}
\newcommand{\numFeat}{c}
\newcommand{\adjMatSet}{\mathcal{A}}
\newcommand{\degMatSet}{\mathcal{D}}
\newcommand{\opVolume}[1]{\operator{vol}{#1}}
\newcommand{\opQuadMapMat}[2]{#1\boxdot#2}
\newcommand{\numMPsteps}[0]{\numLayers'}
\newcommand{\opSparse}[4]{\operator{sparse}{#1,#2,#3,#4}}
\newcommand{\amsvectb}{%
    \mathpalette {\overarrow@\vectfillb@}}
\newcommand{\vecbar}{%
    \scalebox{0.9}{$\relbar$}}
\def\vectfillb@{\arrowfill@\vecbar\vecbar{\raisebox{-4.275pt}[\p@][\p@]{$\mathord\mathchar"017E$}}}
\newcommand{\messagePassSet}[0]{\mathcal{P}}
\newcommand{\messagePassSetExact}[0]{\messagePassSet^*}
\newcommand{\invSqrt}[0]{-\frac{1}{2}}
\newcommand{\volumeRel}{V_\text{rel}}
\begin{document}

\maketitle

\vspace{\baselineskip}

\begin{abstract}

Graph neural networks are becoming increasingly popular in the field of machine learning due to their unique ability to process data structured in graphs.
They have also been applied in safety-critical environments where perturbations inherently occur.
However, these perturbations require us to formally verify neural networks before their deployment in safety-critical environments
as neural networks are prone to adversarial attacks.
While there exists research on the formal verification of neural networks,
there is no work verifying the robustness of generic graph convolutional network architectures with uncertainty in the node features and in the graph structure over multiple message-passing steps.
This work addresses this research gap by explicitly preserving the non-convex dependencies of all elements in the underlying computations through reachability analysis with (matrix) polynomial zonotopes.
We demonstrate our approach on three popular benchmark datasets. 

\end{abstract}

\section{Introduction} \label{sec:introduction}

A graph neural network extends the typical notion of feedforward neural networks to graph inputs~\citep{kipf2017semisupervised}.
Each node in the graph is associated with a feature vector,
which is iteratively updated by exchanging information with neighboring nodes using their feature vectors over multiple message-passing steps.
They have shown to achieve state-of-the-art results in a variety of fields~\citep{wu2020comprehensive},
including advances in drug discovery~\citep{zhang2021graph},
recommender systems in social networks~\citep{ying2018graph},
and have also been applied in safety-critical environments such as power grids~\citep{wang2022parameter,stock2022application, wu2022state} and cooperative autonomous driving~\citep{chen2021graph}.

However, it is well known that neural networks are sensitive to adversarial attacks~\citep{goodfellow2015explaining},
where minor perturbations to the input can lead to unexpected predictions.
Adversarial examples have also extensively been studied for graph neural networks~\citep{dai2018adversarial,gunnemann2022graph},
where both the node features and the graph structure can be perturbed.
As graph neural networks are a generalization of many other network architectures to non-Euclidean input data~\citep{bronstein2017geometric},
the existence of adversarial examples is not surprising.
Thus, neural networks need to be formally verified before they can be safely deployed~\citep{brix2023fourth,koenig2024critically}.

This is particularly important for safety-critical cyber-physical systems.
For example, accurate state estimation is essential for the safe control of power grids,
as inappropriate power injections might lead to blackouts~\citep{primadianto2017review}.
Graph neural networks have been demonstrated for grid parameter identification~\citep{wang2022parameter} and real-time state estimation~\citep{stock2022application, wu2022state},
where uncertainties come from unmodelled environment interactions~\citep{bhattarai2017transmission}, manipulation of a few network node sensor readings or, in the extreme case, on the destruction of transmission infrastructure~\citep{liang2017review, kosut2011malicious}.
Similar scenarios also occur in cooperative autonomous driving when graph neural networks are applied~\citep{chen2021graph},
where distances to other cars are inherently uncertain and features can be adversarially manipulated to favor their own car.
Using formally verified graph neural networks would leverage the capabilities of graph neural networks in safety-critical scenarios.

\subsection{Related Work}

Most state-of-the-art verifiers only consider standard, feedforward neural networks~\citep{brix2023fourth,koenig2024critically}:
These can generally be categorized into complete and incomplete algorithms~\citep{koenig2024critically}.
Complete algorithms~\citep{huang2017safety,katz2017reluplex} compute the exact output of a neural network given perturbations on the input.
This allows one to either verify given specifications or to extract a counterexample.
However, it has been shown that verifying a neural network with ReLU activations requires solving an exponential number of linear subproblems as this problem is NP-hard~\citep{katz2017reluplex}.
Thus, many existing verifiers use incomplete but sound algorithms~\citep{brix2023fourth},
which can verify given specifications by relaxing the problem;
however, this relaxation might prevent them from extracting a counterexample when the specification could not be verified.
These verifiers can again be categorized into optimization-based approaches and approaches using reachability analysis.

Optimization-based approaches formulate relaxed constraints for the activation functions in a neural network.
This relaxed problem is then solved using satisfiability modulo theories (SMT) or mixed integer programming (MIP) solvers~\citep{zhang2018efficient, katz2019marabou, muller2021prima,tjeng2018evaluating,dutta2018output},
or symbolic interval propagation~\citep{henriksen2020efficient,singh2019abstract,brix2020debona}.
These algorithms can be improved using branch-and-bound strategies~\citep{bunel2020branch},
where the problem is divided into simpler subproblems.
For example, one can split ReLU neurons into their linear parts~\citep{botoeva2020efficient, singh2018boosting}.
Such branch-and-bound strategies~\citep{wang2021beta,ferrari2022complete,shi2023generalnonlinear} are currently the dominant strategies in state-of-the-art verifiers~\citep{brix2023fourth}.

On the other hand, one can use reachability analysis to verify a neural network by computing an enclosure of the output set.
This is realized by propagating the perturbed input set through each layer of the neural network and bounding all approximation errors.
Early approaches propagate convex set representations through neural networks, such as intervals~\citep{pulina2010abstraction} and zonotopes~\citep{gehr2018ai2, singh2018fast}.
Non-convex set representations can improve the verification results as the exact output set can be non-convex due to the nonlinearities within the network.
These approaches use Taylor models~\citep{ivanov2021verisig, sergiy2019juliareach, huang2021polar}, star sets~\citep{bak2021nnenum,lopez2023nnv}, and polynomial zonotopes~\citep{kochdumper2022open,ladner2023automatic} to verify neural networks.
Branch-and-bound strategies are also used in approaches using reachability analysis~\citep{xiang2018output}.

To the best of our knowledge, there exist only a few approaches considering the formal verification of graph neural networks.
As with feedforward neural networks~\citep{katz2017reluplex},
the theoretical limits of the graph neural networks verification problem have been discussed~\citep{salzer2023fundamental}.
Thus, most existing methods for verifying graph neural networks again employ incomplete but sound algorithms:
Some approaches~\citep{zugner2019certifiable,bojchevski2019certifiable} formulate uncertainty in the semi-supervised node classification setting as an optimization problem,
where uncertain node features~\citep{zugner2019certifiable} and uncertainty in the graph structure~\citep{bojchevski2019certifiable} are considered separately.
The network architecture in the latter approach only has a single, slightly altered message-passing step.
This approach is extended to restrict both the global and the local uncertainty of the graph~\citep{jin2020certified}.
Another approach~\citep{wu2022scalable} verifies uncertain node features in graph neural networks for job schedulers by unrolling them into feedforward neural networks and verifies them using reachability analysis.
It is also worth mentioning that probabilistic guarantees can be achieved using randomized smoothing~\citep{jia2020certified,bojchevski2020efficient},
and one can try to defend adversarial attacks~\citep{jin2020graph};
however, these approaches do not provide formal guarantees.
Thus, such approaches are not directly usable in safety-critical environments where one has to guarantee safety.

\subsection{Contributions}

\begin{figure*}
    \centering
    \tikzsetnextfilename{./figures/example_graph_msp}%
    \begin{tikzpicture}
\newcommand{\someNodeFeatures}{$\begin{bmatrix}\vdots\end{bmatrix}$}
  \node[shape=circle,draw=black,label={180:{\someNodeFeatures}}] (1) at (0,2) {2};
  \node[shape=circle,draw=black,label={180:{\someNodeFeatures}}] (2) at (0,0) {3};
  \node[shape=circle,draw=black,label={315:{\someNodeFeatures}}] (3) at (2,1) {1};
  \node[shape=circle,draw=black,label={180:{\someNodeFeatures}}] (4) at (4,3) {4};
  \node[shape=circle,draw=black,label={315:{\someNodeFeatures}}] (5) at (6,1) {5};
  \node[shape=circle,draw=black,label={45:{\someNodeFeatures}}] (6) at (8,2) {6};
  \node[shape=circle,draw=black,label={0:{\someNodeFeatures}}] (7) at (10,1) {7};

  \path [dashed] (1) edge node[left] {} (2);
  \path [-] (1) edge (3);
  \path [-] (2) edge (3);
  \path [-] (3) edge (4);
  \path [dashed] (4) edge (5);
  \path [-] (4) edge (6);
  \path [-] (3) edge (5);
  \path [dashed] (5) edge (6);
  \path [-] (6) edge (7);


  \node[align=center] (edgeUncertain) at (10,-0.25) {Presence of edge unknown};
  \path (5) -- (6) coordinate[midway] (edgemid);
  \path [-stealth,thick] (edgeUncertain) edge ($(edgeUncertain)!0.925!(edgemid)$);

  \node[align=center] (featUncertain) at (0,3.25) {Uncertain node \\ features $\subset\R^{\numFeat}$};
\path [-stealth,thick] (featUncertain) edge ($(featUncertain)!0.75!(4)$);
\end{tikzpicture}%

    \caption{Graph $\Graph$ with uncertain node features and uncertain graph structure.}
    \label{fig:example_graph_msp}
\end{figure*}

Our contributions are as follows:
\begin{itemize}
    \item We present the first approach to verify graph convolutional networks with uncertain node features and an uncertain graph structure as input over multiple message-passing steps (\cref{fig:example_graph_msp}).
          The considered architecture of the graph convolutional network is generic and can have any element-wise activation function.
    \item We explicitly preserve the non-convex dependencies of all involved variables through all layers of the graph neural network using (matrix) polynomial zonotopes.
    \item The resulting verification algorithm has polynomial time complexity in the number of uncertain input features and in the number of uncertain edges.
    \item We demonstrate our approach on three popular benchmark datasets with added perturbations on the node features and the graph structure.
\end{itemize}

This work is structured as follows:
In \cref{sec:background}, we introduce all required preliminaries and the problem statement,
followed by defining the matrix variant of polynomial zonotopes in \cref{sec:pre-content}.
Our verification approach is described in \cref{sec:content}:
We first show that graph-based layers in neural networks can be computed exactly using matrix polynomial zonotopes with only uncertain input features.
The required adaptations when also the graph structure is uncertain are described subsequently.
Finally, we show experimental results in \cref{sec:evaluation} and draw conclusions in \cref{sec:conclusion}.

\section{Background} \label{sec:background}

\subsection{Notation}
We denote scalars and vectors by lowercase letters, matrices by uppercase letters, and sets by calligraphic letters.
The $i$-th element of a vector $v\in\R^n$ is written as $v_{(i)}$.
The element in the $i$-th row and $j$-th column of a matrix $A\in\R^{n\times m}$ is written as $A_{(i,j)}$,
the entire $i$-th row and $j$-th column are written as $A_{(i,\cdot)}$ and $A_{(\cdot,j)}$, respectively.
The concatenation of $A$ with a matrix $B\in\R^{n\times o}$ is denoted by $[A\ B]\in\R^{n\times (m+o)}$.
The empty matrix is written as~$[\ ]$.
We denote with $I_n$ the identity matrix of dimension $n\in\mathbb{N}$.
The symbols $\mathbf{0}$ and $\mathbf{1}$ refer to matrices with all zeros and ones of proper dimensions, respectively.
Given $n\in\mathbb{N}$, we use the shorthand notation $[n]=\left\{ 1,\ldots,n \right\}$.
The cardinality of a discrete set $\mathcal{D}$ is denoted by $|\mathcal{D}|$.
Let $\mathcal{D}\subseteq [n]$, then $A_{(\mathcal{D}, \cdot)}$ denotes all rows $i\in\mathcal{D}$ in lexicographic order;
this is used analogously for columns.
Let $\mathcal{S}\subset\R^n$ be a set
and $f\colon \R^n\rightarrow \R^m$ be a function,
then $f(\mathcal{S}) = \left\{f(x)\ \middle|\ x \in \mathcal{S}\right\}$.
An interval with bounds $a, b\in\R^n$ is denoted by $[a,b]$, where $a\leq b$ holds element-wise.

\subsection{Neural Networks}\label{subsec:neural-networks}

Let us introduce the neural network architectures we consider in this work.
We start by stating a general formalization of a neural network and, afterward, several types of layers.

\begin{definition}[Neural Networks~{\cite[Sec.~5.1]{bishop2006pattern}}]
    \label[definition]{def:ffnn}
    Let $x\in\R^{\numNeurons_0}$ be the input of a neural network $\Phi$ with $\numLayers$ layers,
    its output $y = \Phi(x) \in\R^{\numNeurons_\numLayers}$ is obtained as follows:
    \begin{align*}
        \label{eq:layer-propagation}
        h_0 &= x, \\
        h_k &= \layer{k}{}{h_{k-1}},  \quad k\in [\numLayers], \\
        y &= h_\numLayers,
    \end{align*}
    where $L_k\colon \R^{\numNeurons_{k-1}} \rightarrow \R^{\numNeurons_{k}}$ represents the operation of layer $k$.
\end{definition}

Standard, non-graph-based neural networks are usually composed of alternating linear layers and nonlinear activation layers:

\begin{definition}[Linear Layer]
    \label[definition]{def:lin_layer}
    A linear layer is defined by the operation
    \begin{equation*}
        h_k = \layer{k}{LIN}{h_{k-1}} = W_k h_{k-1} + b_k,
    \end{equation*}
    with weight matrix $W_k\in\R^{\numNeurons_{k}\times \numNeurons_{k-1}}$, and bias vector $b_k\in\R^{\numNeurons_{k}}$.
\end{definition}

\begin{definition}[Activation Layer]
    \label[definition]{def:act_layer}
    An activation layer is defined by the operation
    \begin{equation*}
        h_k = \layer{k}{ACT}{h_{k-1}}  = \actFun_k(h_{k-1}),
    \end{equation*}
    where $\actFun_k(\cdot)$ is the respective element-wise nonlinear activation function,
    e.g., sigmoid or ReLU. 
\end{definition}

Graph neural networks generalize standard neural networks and additionally take a graph $\Graph=\defGraph{\Nodes}{\Edges}$ as an input,
where $\Nodes\subset\N$ denotes the set of nodes and $\Edges\subseteq\Nodes\times\Nodes$ the set of edges of $\Graph$.
For each node $i\in\Nodes$, we associate a feature vector $X_{(i,\cdot)}\in\R^{1\times\numFeat_0}$ with $\numFeat_0$ input features, as illustrated in \cref{fig:example_graph_msp}.
These feature vectors of all $\numNodes$ nodes are stacked vertically to obtain the input feature matrix $X\in\R^{\numNodes\times\numFeat_0}$.
Graph neural networks contain message-passing layers in which neighboring nodes exchange information (\cref{fig:gnn-overview}).
In this work, we consider the well-established graph convolutional layer~\citep{kipf2017semisupervised},
which combines a node-level linear layer and a message-passing layer:

\begin{figure}
    \centering
    \tikzsetnextfilename{./figures/gnn-overview}%
    \begin{tikzpicture}

    \definecolor{CORAcolor1}{rgb}{0, 0.4470, 0.7410}
    \definecolor{CORAcolor2}{rgb}{0.8500, 0.3250, 0.0980}
    \definecolor{CORAcolor3}{rgb}{0.9290, 0.6940, 0.1250}
    \definecolor{CORAcolor4}{rgb}{0.4940, 0.1840, 0.5560}
    \definecolor{CORAcolor5}{rgb}{0.4660, 0.6740, 0.1880}
    \newcommand{\nnLayerName}[2]{\ensuremath{L_{#1}^\text{#2}}}

    \tikzset {
        layer/.style={draw=black,inner sep=2px,minimum height=1.6cm, minimum width=1cm},
        layerMini/.style={draw=black,inner sep=2px,minimum height=0.8cm, minimum width=1cm},
        connection/.style={->,thick,draw=black},
        layerGC/.style={layer,draw=CORAcolor1,fill=CORAcolor1!30},
        layerACT/.style={layer,draw=CORAcolor4,fill=CORAcolor4!30},
        layerGP/.style={layer,draw=CORAcolor3,fill=CORAcolor3!30},
        layerLIN/.style={layer,draw=CORAcolor5,fill=CORAcolor5!30}
    }
    \newcommand{\drawConnection}[2]{\draw[connection] (#1) -| ($(#1)!.5!(#2)$) |- (#2);}

    \node[label={Input}] (input) at (0,0) {$X,\ \Graph$};
    \node (dots1) at (1.5,0) {$\cdots$};
    \drawConnection{input.east}{dots1.west}

    \node[layerGC] (GC1) at (3,0) {\nnLayerName{k'-1}{GC}};
    \drawConnection{dots1.east}{GC1.west}
    \node[layerACT] (ACT1) at (4.5,0) {\nnLayerName{k'}{ACT}};
    \drawConnection{GC1.east}{ACT1.west}

    \node (dots2) at (6,0) {$\cdots$};
    \drawConnection{ACT1.east}{dots2.west}
    \node[label={$\numMPsteps$ Message Passing Steps}] (info1) at (3.75,1.25) {};
    \draw[connection] (dots2.north) |- (info1.center) -| (dots1);

    \node[layerGP] (GP) at (7.5,0) {\nnLayerName{2\numMPsteps+1}{GP}};
    \drawConnection{dots2.east}{GP.west}
    \node[label={Global Pooling}] (info3) at (7.5,0.75) {};

    \node (dots3) at (9,0) {$\cdots$};
    \drawConnection{GP.east}{dots3.west}

    \node[layerLIN] (LIN1) at (10.5,0) {\nnLayerName{k}{LIN}};
    \drawConnection{dots3.east}{LIN1.west}
    \node[layerACT] (ACT2) at (12,0) {\nnLayerName{k+1}{ACT}};
    \drawConnection{LIN1.east}{ACT2.west}

    \node (dots4) at (13.5,0) {$\cdots$};
    \drawConnection{ACT2.east}{dots4.west}
    \node[label={Non-Graph-Based Layers}] (info2) at (11.25,1.25) {};
    \draw[connection] (dots4.north) |- (info2.center) -| (dots3);

    \node[label={Output}] (output) at (15,0) {$y$};
    \draw[connection] (dots4) -- (output);

\end{tikzpicture}%

    \caption{Example architecture of a graph neural network.}
    \label{fig:gnn-overview}
\end{figure}

\begin{definition}[Graph Convolutional Layer~{\cite[Eq.~2]{kipf2017semisupervised}}]
    \label[definition]{def:gnn-layer}
    Given are a weight matrix $W\in\R^{\numFeat_{k-1}\times\numFeat_{k}}$, an adjacency matrix $A\in\R^{\numNodes\times\numNodes}$ of a graph $\Graph$, and an input $H_{k-1}\in\R^{\numNodes\times\numFeat_{k-1}}$.
    Let $\tilde{A}=A+I_\numNodes$ be the adjacency matrix with added self-loops and $\tilde{D}=\opDiag[inline]{\mathbf{1}\tilde{A}}\in\R^{\numNodes\times\numNodes}$ be the diagonal degree matrix.
    The computation for a graph convolutional layer $k$ is computed as
    \begin{equation*}
        H_k = \layer{k}{GC}{H_{k-1},\Graph} = \tilde{D}^{\invSqrt}\tilde{A}\tilde{D}^{\invSqrt} H_{k-1} W_k.
    \end{equation*}
\end{definition}

The term $P=\tilde{D}^{\invSqrt}\tilde{A}\tilde{D}^{\invSqrt}$ computes the message passing between nodes.
The adjacency matrix $A$ can also be a weighted adjacency matrix for graphs with scalar edge weights~\citep[Sec. 7.2]{kipf2017semisupervised}.
Please note that activation layers (\cref{def:act_layer}) work identical for matrix inputs $H_k$ instead of vectors $h_k$.

Please note that related verification approaches considering uncertainty in the graph structure~\citep{bojchevski2019certifiable,jin2020certified} consider $\tilde{D}^{-1}\tilde{A}$ instead of $\tilde{D}^{-\frac{1}{2}}\tilde{A}\tilde{D}^{-\frac{1}{2}}$ in their message passing step.
This is justified by the argument that it corresponds to the personalized page rank matrix, which has a similar spectrum.
However, without appropriate approximation errors, the verification of the original graph neural network remains unknown using this modification.

Depending on the use case, we let a graph neural network $\NN$ return a node-level or graph-level output.
For a node-level output, the output is simply the feature matrix of the last layer: $Y = \NN(X,\Graph) \in \R^{\numNodes\times\numFeat_\numLayers}$.
For a graph-level output, we aggregate all node feature vectors into a single graph feature vector. Thus, $y = \NN(X,\Graph) \in \R^{\numNeurons_\numLayers}$.
This is realized using a pooling layer, which is computed as follows:

\begin{definition}[Global Pooling Layer]
    \label[definition]{def:pooling_layer}
    A global pooling layer aggregates all node feature vectors $H_{k-1}\in\R^{\numNodes\times\numFeat_{k-1}}$ within a graph $\Graph$ into a single graph feature vector $h_k\in\R^{\numFeat_{k-1}}$ as follows:
    \begin{align*}
        h_k = \layer{k}{GP}{H_{k-1},\Graph} = \psi_k(H_{k-1}),
    \end{align*}
    where $\psi_k(\cdot)$ denotes a permutation invariant aggregation function across all nodes, e.g., sum, mean, or maximum.
\end{definition}

For example,
\begin{equation}
    \label{eq:sum-pooling-layer}
    \psi_k(H_{k-1}) = (\mathbf{1} H_{k-1})^\top
\end{equation}
computes a summation across all nodes in a global pooling layer $k$.
For graph neural networks with a graph-level output, there can be regular linear and activation layers after the pooling layer (\cref{fig:gnn-overview}).

\subsection{Set-Based Computing}\label{subsec:set-based-computing}

We verify neural networks using continuous sets.
For an input set $\inputSet\subset\R^{\numNeurons_0}$ of a neural network $\NN$,
the exact output set $\outputSetExact = \NN(\inputSet)$ is computed by
\begin{align}
    \label{eq:layer-propagation-sets}
    \begin{split}
        \hiddenSetExact_0 &= \inputSet, \\
        \hiddenSetExact_k &= \layer{k}{ }{\hiddenSetExact_{k-1}}, \quad k\in[\numLayers], \\
        \outputSetExact &= \hiddenSetExact_\numLayers.
    \end{split}
\end{align}

Polynomial zonotopes are a well-suited set representation to verify graph neural networks
due to their polynomial computational complexity of the required operations.
We briefly introduce polynomial zonotopes and all required operations here and give details and an example on this set representation in \cref{app:polynomial-zonotopes}.

\begin{definition}[Polynomial Zonotope~{\citep{kochdumper2020sparse}}]
    \label[definition]{def:pZ}
    Given an offset $c\in\R^{n}$,
    a generator matrix of dependent generators $G \in \R^{n\times h}$,
    a generator matrix of independent generators $G_I \in\R^{n\times q}$,
    and an exponent matrix $E\in \mathbb{N}_0^{p\times h}$ with an identifier $\PZid\in\mathbb{N}^p$,
    a polynomial zonotope\footnote{As in~\citet{kochdumper2022extensions}, we adapt the definition from~\citet{kochdumper2020sparse} and do not integrate the offset $c$ into the generator matrix $G$ and omit the identifier vector almost always for simplicity.} $\PZ = \defPZ{c}{G}{G_I}{E}$ is defined as
    \begin{align*}
            \PZ \coloneqq \left\{ c + \sum_{i=1}^h \left(\prod_{k=1}^p \alpha_k^{E_{(k,i)}}\right) G_{(\cdot, i)} + \sum_{j=1}^{q} \beta_j G_{I(\cdot, j)}\ \middle|\ %
             \alpha_k, \beta_j \in \left[-1,1\right] \right\}.
    \end{align*}
\end{definition}

The identifier $\PZid$ is used to keep track of the dependencies of the factors $\alpha_k$ between different polynomial zonotopes.
Given two polynomial zonotopes $\PZ_1=\defPZ{c_1}{G_1}{G_{I,1}}{E_1}$, $\PZ_2=\defPZ{c_2}{G_2}{G_{I,2}}{E_2}\subset\R^n$,
the Minkowski sum is computed by~\citep[Prop.~8]{kochdumper2020sparse}
\begin{align}
    \begin{split}
    \label{eq:mink-sum}
        \PZ_1\oplus\PZ_2 &= \left\{ x_1 + x_2\ \middle|\ x_1\in\PZ_1,\,x_2\in\PZ_2 \right\} \\
        &=\defPZ[,]{c_1 + c_2}{\cmatrix{G_1\ G_2}}{\cmatrix{G_{I,1}\ G_{I,2}}}{\cmatrix{E_1 & \mathbf{0} \\ \mathbf{0} & E_2 }}
    \end{split}
\end{align}
and given $A\in\R^{m\times n},b\in\R^m$,
the affine map is computed by~\citep[Prop.~9]{kochdumper2020sparse}
\begin{align}
    \begin{split}
    \label{eq:linear-map}
        A\PZ_1+b = \left\{ Ax+b\ \middle|\ x\in\PZ_1 \right\} = \defPZ{Ac_1+b}{AG_1}{AG_{I,1}}{E_1}.
    \end{split}
\end{align}

\subsection{Verification of Feedforward Neural Networks}\label{subsec:neural-network-verification}

Finally, we briefly introduce the main steps to propagate a polynomial zonotope through a standard, non-graph-based neural network (\cref{def:ffnn}).
Since the set propagation through a neural network~\eqref{eq:layer-propagation-sets} cannot be computed exactly in general, we have to enclose the output of each layer:

\begin{proposition}[Image Enclosure~{\cite[Sec. 3]{kochdumper2022open}}]
    \label[proposition]{prop:image-enclosure}
    Let $\hiddenSet_{k-1} \supseteq \hiddenSetExact_{k-1} \subset\R^{\numNeurons_{k-1}}$ be an input set to layer $k$,
    then
    \begin{equation*}
        \hiddenSet_{k} = \opEnclose{L_k}{\hiddenSet_{k-1}} \supseteq \hiddenSetExact_k \subset\R^{\numNeurons_{k}}
    \end{equation*}
    computes an outer-approximative output set. 
    If the layer $k$ is nonlinear (\cref{def:act_layer}), the output~$\hiddenSet_{k}$ has at most $\numNeurons_k$ more generators than $\hiddenSet_{k-1}$.
\end{proposition}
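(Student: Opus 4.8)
I would prove both parts --- the outer-approximation property and the generator bound --- by a case distinction on the type of $L_k$, since a standard neural network (\cref{def:ffnn}) consists only of linear layers (\cref{def:lin_layer}) and activation layers (\cref{def:act_layer}).

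For a linear layer the result is immediate: the affine map \eqref{eq:linear-map} is computed \emph{exactly} on a polynomial zonotope, so $\hiddenSet_k = W_k\hiddenSet_{k-1} + b_k$. Soundness is then just set monotonicity --- applying the same map to the larger input set gives $\hiddenSet_k = L_k(\hiddenSet_{k-1}) \supseteq L_k(\hiddenSetExact_{k-1}) = \hiddenSetExact_k$ --- and since \eqref{eq:linear-map} leaves the exponent matrix and the generator count unchanged, no new generators appear, which is consistent with the claimed bound.

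The main case is the activation layer, where $\actFun_k$ is applied element-wise and cannot be represented exactly, so I would use an approximate-and-bound construction in three steps. First, compute an interval enclosure $\opInvervalEnclosure{\hiddenSet_{k-1}}$ to obtain, per neuron, the domain on which the scalar activation must be approximated. Second, on that domain replace each neuron's activation by an affine approximation $p$ and evaluate it on $\hiddenSet_{k-1}$ via \eqref{eq:linear-map}; this reuses the existing factors and therefore adds no generators. Third, bound the pointwise approximation error $\actFun_k(x) - p(x)$ over the input domain by an interval and append it through a Minkowski sum \eqref{eq:mink-sum}; since this error interval is an axis-aligned box in $\R^{\numNeurons_k}$, its generator representation has at most $\numNeurons_k$ generators, which yields exactly the stated bound. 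A higher-degree polynomial approximation is equally admissible and typically tighter, at the cost of additional dependent generators from the polynomial map.

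Soundness of the activation case follows from a pointwise argument: by construction of the error interval, $\actFun_k(x) - p(x)$ lies inside it for every $x \in \hiddenSet_{k-1}$, hence $\actFun_k(x) \in p(\hiddenSet_{k-1}) \oplus [\text{error}] = \hiddenSet_k$; combined with the input inclusion $\hiddenSet_{k-1} \supseteq \hiddenSetExact_{k-1}$ this gives $\hiddenSet_k \supseteq \actFun_k(\hiddenSetExact_{k-1}) = \hiddenSetExact_k$. I expect the third step to be the crux: the entire soundness guarantee rests on bounding the approximation error over the \emph{whole} continuous input domain --- not merely at sampled points --- so the argument must produce a certified over-approximation of that error rather than an empirical estimate.
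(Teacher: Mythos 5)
Your proposal is correct and follows essentially the same route as the paper: the paper does not give a standalone proof of this cited result but describes exactly your construction in \cref{subsec:neural-network-verification} and \cref{fig:nonlinear-main-steps} --- exact affine maps for linear layers via \eqref{eq:linear-map}, and for activation layers the element-wise bound-approximate-and-enclose procedure whose final Minkowski sum \eqref{eq:mink-sum} with the per-neuron error interval accounts for the at most $\numNeurons_k$ additional generators. Your identification of the certified (non-empirical) error bound in step~4 as the crux matches the paper's emphasis.
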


Using polynomial zonotopes, the output of a linear layer can be computed exactly~\eqref{eq:linear-map};
however, the output of activation layers needs to be enclosed to obtain a sound outer approximation.
We summarize the six main steps to enclose a nonlinear layer next and visualize them in \cref{fig:nonlinear-main-steps}:
As we only consider element-wise activation functions, we can enclose each neuron individually (step~1).
In step~2, we find bounds for our input set.
Next, we approximate the activation function using a polynomial (step~3) and find an appropriate approximation error (step~4).
Finally, the chosen polynomial is evaluated over the input set (step~5), and the output is enclosed using the approximation error (step~6),
where one generator for each neuron of the current layer is added using~\eqref{eq:mink-sum}.
While we have depicted the steps in \cref{fig:nonlinear-main-steps} using a polynomial of order one,
higher-order polynomials can be used to obtain a tighter enclosure~\citep{ladner2023automatic} using multiple applications of \cref{prop:quad-map}.

\begin{figure}
    \centering
    \tikzsetnextfilename{./figures/steps_nonlinear/main_steps_nonlinear}%
    \definecolor{bound_color}{rgb}{0.6,0.6,0.6}%
\definecolor{poly_color}{rgb}{0.00000,0.44700,0.74100}%
\definecolor{set_color}{rgb}{0.46600,0.67400,0.18800}%
\definecolor{error_color}{rgb}{1, 0, 0}%
\begin{tikzpicture}
\footnotesize
\pgfplotsset{
plotstyle1/.style={color=black, forget plot},
plotstyle2/.style={color=set_color, forget plot},
plotstyle3/.style={color=bound_color, dashed, forget plot},
plotstyle4/.style={color=poly_color, forget plot},
plotstyle5/.style={color=error_color, forget plot},
plotstyle6/.style={color=set_color, dashed, forget plot}
}
\def\rows{1}
\def\cols{4}
\def\horzsep{1cm}
\def\basepath{./figures/steps_nonlinear/}

\begin{groupplot}[%
group style={rows = \rows, columns = \cols, horizontal sep = \horzsep},
scale only axis,
width=1/\cols*\linewidth -\horzsep,
legend style={legend columns=5,legend to name=legendname, legend cell align=left,/tikz/every even column/.append style={column sep=0.5cm}}
]
\pgfplotsset{ticks=none}
\nextgroupplot[xmin=-5.04,xmax=5.04,ymin=-0.24868,ymax=1.24868,xlabel={Input},ylabel={Output},title={(a) Steps 1 \& 2}]
\coordinate (top) at (rel axis cs:0,1);
\input{\basepath main_steps_nonlinear_21.tikz}
\nextgroupplot[xmin=-5.04,xmax=5.04,ymin=-0.24868,ymax=1.24868,xlabel={Input},ylabel={Output},title={(b) Steps 3 \& 4}]
\input{\basepath main_steps_nonlinear_22.tikz}
\nextgroupplot[xmin=-5.04,xmax=5.04,ymin=-0.24868,ymax=1.24868,xlabel={Input},ylabel={Output},title={(c) Steps 5 \& 6}]
\input{\basepath main_steps_nonlinear_legends.tikz}
\input{\basepath main_steps_nonlinear_23.tikz}
\coordinate (bot) at (rel axis cs:1,0);
\end{groupplot}
\path (top|-current bounding box.south)--coordinate(legendpos)(bot|-current bounding box.south);
\node at([yshift=-3ex]legendpos) {\pgfplotslegendfromname{legendname}};

\end{tikzpicture}

    \caption{Main steps of enclosing a nonlinear layer.
    Step~1: Evaluate nonlinear function element-wise. Step~2: Find bounds of the input set.
    Step~3: Find an approximating polynomial. Step~4: Find the approximation error.
    Step~5: Evaluate polynomial over the input set. Step~6: Add the approximation error.}
    \label{fig:nonlinear-main-steps}
\end{figure}

\subsection{Problem Statement}\label{sec:problem-statement}
Given an uncertain graph~$\Graph=\defGraph{\Nodes}{\Edges}$ with nodes~$\Nodes\subset\N$ and edges~$\Edges = \EdgesFixed \cup \EdgesUncertain\subseteq \Nodes\times\Nodes$ consisting of fixed edges~$\EdgesFixed$ and uncertain edges~$\EdgesUncertain$,
an uncertain input feature matrix $\inputSet\subset\R^{\numNodes\times\numFeat_0}$, a graph neural network~$\NN$, and an unsafe set $\unsafeSet\subset\R^{\numNeurons_\numLayers}$ where $n_\numLayers$ denotes the dimension of the output of $\NN$,
we want to compute an outer-approximative output set $\outputSet$ such that it encloses the output for all possible graph inputs:
\begin{equation*}
    \forall \overline{\Edges}\subseteq\EdgesUncertain\colon \quad \NN(\inputSet,\defGraph{\Nodes}{\EdgesFixed\cup\overline{\Edges}}) \subseteq \outputSet.
\end{equation*}
We can then verify the given specification by showing that:
\begin{equation*}
    \outputSet\cap\unsafeSet = \emptyset.
\end{equation*}

Please note that the threat model considers the case where an attacker only has access to a limited number of edges $\EdgesUncertain$.
For example, an attacker partially controls the infrastructure of a power grid and can thus destroy certain connections.
Therefore, we cannot be sure if the edges $\EdgesUncertain$ are present, 
leading to $2^{|\EdgesUncertain|}$ possible graph inputs with uncertain node features.

\section{Matrix Polynomial Zonotopes} \label{sec:pre-content}

Before we present our approach,
we introduce an extension to polynomial zonotopes
to address a key challenge in the set-based evaluation of graph neural networks:
Graph convolutional layers require a matrix $H_{k-1}\in\R^{\numNodes\times\numFeat_{k-1}}$ as input (\cref{def:gnn-layer}),
which means that uncertainty must be represented as a set of matrices.
In particular, set-based evaluation requires propagating uncertain matrices $\hiddenSet_{k-1}\subset\R^{\numNodes\times\numFeat_{k-1}}$ through all layers,
which we want to represent as polynomial zonotopes;
however, a (standard) polynomial zonotope cannot represent a set of matrices~(\cref{def:pZ}).
To overcome this limitation, we define its matrix variant and a few required operations on them in this section.
While our focus is on verifying graph neural networks, matrix polynomial zonotopes are generic and have applications beyond this domain.

\begin{definition}[Matrix Polynomial Zonotope]
    \label[definition]{def:pZ-mat}
    Given an offset $C\in\R^{n\times m}$,
    dependent generators $G \in \R^{n\times m \times h}$,
    independent generators $G_I \in\R^{n\times m\times q}$,
    and an exponent matrix $E\in \mathbb{N}_0^{p\times h}$ with an identifier $\PZid\in\mathbb{N}^p$,
    a matrix polynomial zonotope $\PZ = \defPZ{C}{G}{G_I}{E}\subset\R^{n\times m}$ is defined as
    \begin{align*}
        \PZ \coloneqq \left\{ C + \sum_{i=1}^h \left(\prod_{k=1}^p \alpha_k^{E_{(k,i)}}\right) G_{(\cdot,\cdot, i)} + \sum_{j=1}^{q} \beta_j G_{I(\cdot,\cdot, j)}\ \middle|\ %
        \alpha_k, \beta_j \in \left[-1,1\right] \right\}.
    \end{align*}
\end{definition}
Please note that the sum in \cref{def:pZ-mat} adds matrices instead of vectors (\cref{def:pZ}),
and each element of the set $\PZ\subset\R^{n\times m}$ is a matrix: $\forall X\in\PZ\colon X\in\R^{n\times m}$.
Thus, matrix polynomial zonotopes are a generalization of standard polynomial zonotopes.
The Minkowski sum of two matrix polynomial zonotopes $\PZ_1=\defPZ{C_1}{G_1}{G_{I,1}}{E_1}$, $\PZ_2=\defPZ{C_2}{G_2}{G_{I,2}}{E_2}\subset\R^{n\times m}$,
is computed analogously to~\eqref{eq:mink-sum}:
\begin{align}
    \begin{split}
        \label{eq:mink-sum-mat}
        \PZ_1\oplus\PZ_2 &= \left\{ X_1 + X_2\ \middle|\ X_1\in\PZ_1,\,X_2\in\PZ_2 \right\} \\
        &=\defPZ[,]{C_1 + C_2}{\cmatrix{G_1\ G_2}}{\cmatrix{G_{I,1}\ G_{I,2}}}{\cmatrix{E_1 & \mathbf{0} \\ \mathbf{0} & E_2 }}
    \end{split}
\end{align}
where the concatenation of the generators is along the last dimension.
Given the matrices $A_1\in\R^{k\times n},\, A_2\in\R^{m\times k}$, and the vectors $b_1\in\R^{k\times m},\,b_2\in\R^{n\times k}$,
an affine map is computed analogously to~\eqref{eq:linear-map}:
\begin{subequations}
    \label{eq:affine-map-mat}
\begin{align}
        A_1\PZ_1+b_1 &= \left\{ A_1 X + b_1\ \middle|\ X\in\PZ_1 \right\} = \defPZ{A_1 C_1 + b_1}{A_1 G_1}{A_1 G_{I,1}}{E_1},   \label{eq:affine-map-mat-right} \\
        \PZ_1 A_2 +b_2 &= \left\{ X A_2 + b_2\ \middle|\ X\in\PZ_1 \right\} = \defPZ{C_1 A_2 + b_2}{G_1 A_2}{G_{I,1} A_2}{E_1}, \label{eq:affine-map-mat-left}
\end{align}
\end{subequations}
where the matrix multiplications are broadcast across all generators.
Reshaping and transposing a matrix polynomial zonotope are computed by applying the respective operation on the center and each generator, respectively.
In particular, reshaping a matrix polynomial zonotope into a vector by stacking it column-wise results in a standard polynomial zonotope,
which we indicate by a vector decoration ($\vec{\square}$).
This allows us, for example, to seamlessly use a matrix polynomial zonotope $\hiddenSet_{k-1}\subset\R^{\numNodes\times\numFeat_{k-1}}$ during the enclosure of an activation layer $k$
by first reshaping it: $\vec{\hiddenSet}_{k-1}\subset\R^{\numNodes\cdot\numFeat_{k-1}}$,
then obtain $\vec{\hiddenSet}_{k}\subset\R^{\numNodes\cdot\numFeat_{k}}$ using \cref{prop:image-enclosure},
and finally reshape it back to its original shape: $\hiddenSet_{k}\subset\R^{\numNodes\times\numFeat_{k}}$.

During the verification of graph neural networks, we also require the multiplication of two matrix polynomial zonotopes,
which can be computed without inducing additional outer approximations.
\begin{restatable}[Multiplication of Matrix Polynomial Zonotopes]{lemma}{propquadMatsetMat}
    \label[lemma]{prop:quadMat-setMat}
    Given two matrix polynomial zonotopes $\mathcal{M}_1=\defPZ{C_1}{G_1}{\cmatrix{\ \ }}{E_1}\subset\R^{n\times k}$, $\mathcal{M}_2=\defPZ{C_2}{G_2}{\cmatrix{\ \ }}{E_2}\subset\R^{k\times m}$
    with $h_1$, $h_2$ generators, respectively, and a common identifier,
    then their multiplication is obtained by
    \begin{align*}
        \mathcal{M}_3 &= \opQuadMapMat{\mathcal{M}_1}{\mathcal{M}_2} = \left\{ (M_1 M_2)\ \middle|\ M_1\in\mathcal{M}_1,\, M_2\in\mathcal{M}_2 \right\} \\
        &= \defPZ{C}{\cmatrix{\widehat{G}_1 & \widehat{G}_2 & \overline{G}_1 & \ldots & \overline{G}_{h_1}}}{\cmatrix{\ \ }}{\cmatrix{E_1 & E_2 & \overline{E}_1 & \ldots & \overline{E}_{h_1}}}  \subset \R^{n\times m},
    \end{align*}
    where
    \begin{align*}
        C &= C_1 C_2, & \widehat{G}_1 &= G_1 C_2, & \widehat{G}_2 &= C_1 G_2, &
         \overline{G}_i &= G_{1(\cdot,\cdot,i)} G_2,  & \overline{E}_i &= E_{1(\cdot,i)}\cdot \mathbf{1} + E_2, && \forall i\in[h_1].
    \end{align*}
    The matrix multiplications are broadcast across all generators.
    The output $\mathcal{M}_3$ has $\Onot(h_1h_2)$ generators.
\end{restatable}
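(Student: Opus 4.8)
The plan is to prove the claimed set equality by a direct algebraic expansion of the matrix product, exploiting two structural features of the hypotheses: both factors have \emph{empty} independent-generator matrices, and they share a \emph{common identifier}. First I would write down an arbitrary element of each factor. Since $\mathcal{M}_1$ and $\mathcal{M}_2$ share the identifier $\PZid\in\N^p$, the same factors $\alpha_1,\ldots,\alpha_p\in[-1,1]$ parameterize both sets. Abbreviating the monomial coefficients as $\mu_i=\prod_{l=1}^p\alpha_l^{E_{1(l,i)}}$ and $\nu_j=\prod_{l=1}^p\alpha_l^{E_{2(l,j)}}$, a generic $M_1\in\mathcal{M}_1$ reads $M_1=C_1+\sum_{i=1}^{h_1}\mu_i\,G_{1(\cdot,\cdot,i)}$, and analogously $M_2=C_2+\sum_{j=1}^{h_2}\nu_j\,G_{2(\cdot,\cdot,j)}$.

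Next I would expand $M_1M_2$ using distributivity of matrix multiplication over the (Minkowski) sums, obtaining four groups of terms:
\begin{align*}
    M_1 M_2 &= C_1 C_2 + \sum_{i=1}^{h_1}\mu_i\, G_{1(\cdot,\cdot,i)} C_2 + \sum_{j=1}^{h_2}\nu_j\, C_1 G_{2(\cdot,\cdot,j)} \\
    &\quad + \sum_{i=1}^{h_1}\sum_{j=1}^{h_2}\mu_i\nu_j\, G_{1(\cdot,\cdot,i)} G_{2(\cdot,\cdot,j)}.
\end{align*}
I would then match each group against the claimed description from \cref{def:pZ-mat}: the constant-by-constant term gives the center $C=C_1 C_2$; the second group gives the dependent generators $\widehat{G}_1=G_1 C_2$ carrying the unchanged exponent matrix $E_1$ (the multiplication by $C_2$ being broadcast across all $h_1$ generators); and the third group gives $\widehat{G}_2=C_1 G_2$ with exponents $E_2$.

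The crux — and the only step that genuinely uses the common-identifier hypothesis — is the cross term. For each pair $(i,j)$ the coefficient is $\mu_i\nu_j=\prod_{l=1}^p\alpha_l^{E_{1(l,i)}+E_{2(l,j)}}$, which is again a \emph{single} monomial in the shared factors, with exponent vector $E_{1(\cdot,i)}+E_{2(\cdot,j)}$. This is exactly the $j$-th column of $\overline{E}_i=E_{1(\cdot,i)}\cdot\mathbf{1}+E_2$, and the accompanying generator is $G_{1(\cdot,\cdot,i)}G_2$ (broadcast over $j$), i.e.\ $\overline{G}_i$. I expect this additive combination of exponents to be the one point needing care: were the identifiers distinct, $\mu_i\nu_j$ would be a product of monomials in disjoint factors and could not be collapsed into a single admissible monomial, so no exact representation in the form of \cref{def:pZ-mat} would exist. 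The empty independent-generator matrices are what make this collapse complete, since there are no $\beta$-terms whose pairwise products would have to be over-approximated.

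Finally, since the expanded expression equals the stated representation $\defPZ{C}{[\widehat{G}_1\ \widehat{G}_2\ \overline{G}_1\ \ldots\ \overline{G}_{h_1}]}{[\ ]}{[E_1\ E_2\ \overline{E}_1\ \ldots\ \overline{E}_{h_1}]}$ for every admissible choice of $\alpha_1,\ldots,\alpha_p$, the two descriptions parameterize the same matrix for each factor assignment, which yields both inclusions at once and hence $\mathcal{M}_3=\opQuadMapMat{\mathcal{M}_1}{\mathcal{M}_2}$ exactly, without outer approximation. I would close by counting generators: $h_1$ from $\widehat{G}_1$, $h_2$ from $\widehat{G}_2$, and $h_1 h_2$ from the blocks $\overline{G}_1,\ldots,\overline{G}_{h_1}$, for a total of $h_1+h_2+h_1 h_2\in\Onot(h_1 h_2)$.
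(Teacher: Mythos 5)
Your proposal is correct and follows essentially the same route as the paper's proof: a direct distributive expansion of $M_1 M_2$ into the four groups of terms (center, two mixed blocks, cross term), with the common identifier letting the cross-term monomials collapse via exponent addition into $\overline{E}_i = E_{1(\cdot,i)}\cdot\mathbf{1} + E_2$. Your added remarks on why the common-identifier and empty-independent-generator hypotheses are needed go slightly beyond what the paper writes out, but the core argument and the generator count are identical.
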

This operation is a generalization of the multiplication of two zonotopes~\citep[Eq.~(10)]{althoff2011analyzing}
and also has a connection to the quadratic map operation of polynomial zonotopes (\cref{prop:quad-map}).
Please visit \cref{app:polynomial-zonotopes} and the proof in \cref{app:proofs} for details.
Additionally, we discuss the efficient implementation of this operation on a GPU in \cref{app:efficient-impl-mat-set}.

\section{Formal Verification of Graph Convolutional Networks} \label{sec:content}

In this section, we demonstrate how to generalize the verification of standard neural networks~\citep{kochdumper2022open,ladner2023automatic} to graph convolutional networks.
We start by (i) explaining how to verify graph neural networks that have only uncertain node features,
and then (ii) describe the adaptations where, additionally, the graph structure is unknown.
Moreover, we show (iii) how a subgraph can be efficiently extracted in cases where not the entire graph is relevant to verify the specification.
Please follow \cref{subsec:neural-networks} along with this section for the exact equations and the required adaptations made here if the inputs are sets.

\subsection{Verification with Uncertain Node Features}\label{subsec:verify-uncertain-features}

Uncertainty in the node features requires us to define how the graph-specific layers can be enclosed for an uncertain input.
Using matrix polynomial zonotopes, the enclosure of a graph convolutional layer (\cref{def:gnn-layer}) does not induce any additional outer approximation.

\begin{restatable}[Enclosure of Graph Convolutional Layer]{proposition}{propvecgnnlayer}
    \label[proposition]{prop:vec-gnn-layer}
    Given are a weight matrix $W_k$ $\in\R^{\numFeat_{k-1}\times\numFeat_{k}}$, a graph $\Graph=\defGraph{\Nodes}{\Edges}$, and an input $\hiddenSet_{k-1}\subset\R^{\numNodes\times\numFeat_{k-1}}$ represented as a matrix polynomial zonotope.
    Let $A\in\R^{\numNodes\times\numNodes}$ be the adjacency matrix of $\Graph$, $\tilde{A}=A+I_\numNodes$,
    and let $\tilde{D}=\opDiag[inline]{\mathbf{1}\tilde{A}}\in\R^{\numNodes\times\numNodes}$ be the diagonal degree matrix.
    The exact output of a graph convolutional layer $k$ in \cref{def:gnn-layer} is computed by
    \begin{equation*}
        \hiddenSet_{k} = \layer{k}{GC}{\hiddenSet_{k-1}} = \tilde{D}^{-\frac{1}{2}}\tilde{A}\tilde{D}^{-\frac{1}{2}} \hiddenSet_{k-1} W_k.
    \end{equation*}
\end{restatable}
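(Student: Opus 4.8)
The plan is to exploit the fact that in this setting only the node features are uncertain, so the message-passing matrix $P = \tilde{D}^{-\frac{1}{2}}\tilde{A}\tilde{D}^{-\frac{1}{2}}$ is a \emph{constant} matrix: since the adjacency matrix $A$ is fixed, so are $\tilde{A} = A + I_\numNodes$, the degree matrix $\tilde{D}$, and hence $P\in\R^{\numNodes\times\numNodes}$. Consequently the graph convolutional layer $\hiddenSet_k = P\,\hiddenSet_{k-1}\,W_k$ is \emph{linear} in the input feature matrix: it is nothing more than a left multiplication by the constant matrix $P$ followed by a right multiplication by the constant weight matrix $W_k$. Stating this constancy explicitly is the one conceptual point worth making, since it is exactly what distinguishes this proposition — an exact identity with no error term — from the uncertain-graph case treated later, where $P$ itself becomes set-valued.

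First I would recall that, by the convention $f(\mathcal{S}) = \left\{f(x)\mid x\in\mathcal{S}\right\}$ from the notation section, the exact output set of the layer is $\layer{k}{GC}{\hiddenSet_{k-1}} = \left\{P H W_k \mid H\in\hiddenSet_{k-1}\right\}$. The goal is then to show that evaluating the right-hand side $P\,\hiddenSet_{k-1}\,W_k$ with the exact affine-map operations on matrix polynomial zonotopes reproduces precisely this set, so that the set-based computation incurs no outer approximation.

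Next I would apply the two affine-map rules of \eqref{eq:affine-map-mat} in sequence. Applying \eqref{eq:affine-map-mat-right} with $A_1 = P$ and $b_1 = \mathbf{0}$ yields $P\hiddenSet_{k-1} = \left\{P H \mid H\in\hiddenSet_{k-1}\right\}$ exactly, as a matrix polynomial zonotope whose center and generators are obtained by left-multiplying those of $\hiddenSet_{k-1}$ by $P$ while the exponent matrix is unchanged. Applying \eqref{eq:affine-map-mat-left} with $A_2 = W_k$ and $b_2 = \mathbf{0}$ to this result gives $(P\hiddenSet_{k-1})\,W_k = \left\{(P H)W_k \mid H\in\hiddenSet_{k-1}\right\}$, again exactly. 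By associativity of matrix multiplication $(P H)W_k = P H W_k$, so the composed set equals $\left\{P H W_k \mid H\in\hiddenSet_{k-1}\right\}$, which coincides with the exact output set above.

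There is no genuine obstacle here; the only thing to verify carefully is that the two affine maps compose without introducing any outer approximation. This follows because each of \eqref{eq:affine-map-mat-right} and \eqref{eq:affine-map-mat-left} is stated as an \emph{exact} set identity, and because left- and right-multiplication act on opposite sides of every generator (they are broadcast across the last generator dimension and never merge or bound generators). Hence the chained computation is exact, and $\hiddenSet_k = \tilde{D}^{-\frac{1}{2}}\tilde{A}\tilde{D}^{-\frac{1}{2}}\,\hiddenSet_{k-1}\,W_k$ equals the exact image of the graph convolutional layer, completing the argument.
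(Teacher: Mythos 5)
Your proposal is correct and follows essentially the same route as the paper's proof, which simply observes that the layer is a composition of a left and a right matrix multiplication and is therefore computed exactly via the affine maps in~\eqref{eq:affine-map-mat}. Your version merely spells out the details (constancy of $P$, sequential application of \eqref{eq:affine-map-mat-right} and \eqref{eq:affine-map-mat-left}, associativity) that the paper leaves implicit.
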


The enclosure of a pooling layer (\cref{def:pooling_layer}) with a summation as aggregation function as in~\eqref{eq:sum-pooling-layer} is obtained analogously.
\begin{restatable}[Enclosure of Summation Pooling Layer]{proposition}{propvecsumpoolinglayer}
    \label[proposition]{prop:vec-sum-pooling_layer}
    Given a graph $\Graph$ and an input $\hiddenSet_{k-1}\subset\R^{\numNodes\times\numFeat_{k-1}}$ represented as a matrix polynomial zonotope,
    the exact output of a pooling across all nodes via summation is computed by
    \begin{align*}
        \hiddenSet_{k} = \layer{k}{GP}{\hiddenSet_{k-1},\Graph} = (\mathbf{1}\hiddenSet_{k-1})^\top.
    \end{align*}
\end{restatable}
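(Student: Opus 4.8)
The plan is to write the summation pooling map $\psi_k(H_{k-1}) = (\mathbf{1}H_{k-1})^\top$ from \eqref{eq:sum-pooling-layer} as a composition of two elementary operations on matrix polynomial zonotopes---a left affine map followed by a transpose---and to argue that each operation is exact, so that their composition reproduces the true image set $\hiddenSetExact_k = \{\psi_k(H_{k-1}) \mid H_{k-1}\in\hiddenSet_{k-1}\}$ with no outer approximation.

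First I would handle the left-multiplication by the all-ones row vector $\mathbf{1}\in\R^{1\times\numNodes}$, which sums the node-wise feature rows. This is exactly the left affine map of \eqref{eq:affine-map-mat-right} instantiated with $A_1=\mathbf{1}$ and zero offset, yielding $\mathbf{1}\hiddenSet_{k-1} = \{\mathbf{1}H_{k-1} \mid H_{k-1}\in\hiddenSet_{k-1}\}\subset\R^{1\times\numFeat_{k-1}}$ as a matrix polynomial zonotope whose center and generators are the node-wise sums of those of $\hiddenSet_{k-1}$. Since \eqref{eq:affine-map-mat-right} is an exact set identity, this step introduces no error.

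Second I would apply the transpose. Using \cref{def:pZ-mat} and the linearity of the transpose, for every fixed choice of factors $\alpha_k,\beta_j\in[-1,1]$ the transpose distributes over the center and all generators, so transposing each point of a matrix polynomial zonotope is equivalent to transposing its center and generators while leaving the exponent matrix $E$ and the factor ranges untouched---precisely the generator-wise transpose operation introduced in \cref{sec:pre-content}. This shows the transpose is an exact set operation as well. Composing the two exact steps then gives $(\mathbf{1}\hiddenSet_{k-1})^\top = \{(\mathbf{1}H_{k-1})^\top \mid H_{k-1}\in\hiddenSet_{k-1}\} = \hiddenSetExact_k$, which is the claim. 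The only step requiring genuine care is this second one: one must confirm that the generator-wise transpose returns the entire transposed set and not a relaxation, which hinges on the transpose being linear and on the dependency factors $\alpha_k,\beta_j$ being shared between a point and its transpose; the remainder is a direct specialization of the affine-map identity already available for matrix polynomial zonotopes.
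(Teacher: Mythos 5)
Your proposal is correct and follows essentially the same route as the paper, which simply notes that the pooling layer is a left matrix multiplication and is therefore exact by the affine map \eqref{eq:affine-map-mat}; your additional verification that the generator-wise transpose is exact is a detail the paper delegates to its earlier discussion in \cref{sec:pre-content} rather than restating in the proof.
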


Thus, the graph-based layers can be computed without inducing additional outer approximations using matrix polynomial zonotopes
when we only have uncertain node features.

\subsection{Verification with Uncertain Graph Structure}\label{subsec:verify-uncertain-edges}
Verifying graph neural networks becomes more difficult if the presence of some edges is unknown in an uncertain graph~$\Graph$.
This case requires us to enclose the outputs of all possible graph inputs~(\cref{sec:problem-statement}).
We enclose these outputs by computing an outer-approximative output set of an equivalent graph with uncertain edge weights:
Let $\Graph$ have fixed edges~$\EdgesFixed$ and uncertain edges~$\EdgesUncertain$.
Then, we set the edge weight to~$1$ for edges in~$\EdgesFixed$ and
to the interval $[0,1]$ for edges in~$\EdgesUncertain$.
This uncertainty requires a set-based evaluation of the message passing $P=\tilde{D}^{\invSqrt}\tilde{A}\tilde{D}^{\invSqrt}$ in graph convolutional layers~(\cref{def:gnn-layer}).
In particular, we now have an uncertain (weighted) adjacency matrix $\adjMatSet\subset\R^{\numNodes\times\numNodes}$ containing the respective edge weights,
which in turn leads to an uncertain degree matrix $\tilde{\degMatSet}\subset\R^{\numNodes\times\numNodes}$,
and eventually, an uncertain message passing
\begin{equation}
    \messagePassSetExact = \tilde{\degMatSet}^{-\frac{1}{2}}\tilde{\adjMatSet}\tilde{\degMatSet}^{-\frac{1}{2}}.
\end{equation}
Please note that the same applies if the graph has uncertain scalar edge weights.
Subsequently, we detail the required steps to compute an enclosure of the message passing~$\messagePassSet \supseteq \messagePassSetExact$ using matrix polynomial zonotopes ~(\cref{def:pZ-mat}).
Please compare these steps with the definition of a graph convolutional layer (\cref{def:gnn-layer}).
We construct the uncertain adjacency matrix as a matrix polynomial zonotope $\adjMatSet\subset\R^{\numNodes\times\numNodes}$, 
where each generator of $\adjMatSet$ corresponds to one uncertain edge.
Then,
\begin{equation}
    \label{eq:adjmatset-selfloop}
    \tilde{\adjMatSet} = \adjMatSet + I_\numNodes
\end{equation}
adds self-loops to the adjacency matrix.
Analogously to \cref{prop:vec-sum-pooling_layer}, we compute the diagonal entries of the degree matrix $\tilde{\degMatSet}$ by summing across all rows of $\tilde{\adjMatSet}$ using \eqref{eq:affine-map-mat}:
\begin{equation}
    \label{eq:degree-matrix}
    \tilde{\degMatSet}_\text{diag} =  (\mathbf{1}\tilde{\adjMatSet})^\top.
\end{equation}
To obtain $\tilde{\degMatSet}^{-\frac{1}{2}}$,
we note that the inverse of a diagonal matrix is given by the inverse of each entry on the main diagonal.
Additionally, we are required to compute the square root of each entry individually.
However, polynomial zonotopes are not closed under these operations.
Thus, we enclose the output of the inverse square root function using \cref{prop:image-enclosure}.
The function is already applied element-wise, hence, it suffices to provide an appropriate approximation error:

\begin{restatable}[Approximation Error of Inverse Square Root]{lemma}{lemencloseinversesquareroot}
    \label[lemma]{lem:enclose-inverse-square-root}
    Given a polynomial $p(x) = ax + b$ approximating the inverse square root $f(x) = x^{-\frac{1}{2}}$ on the domain $[l,u]\subset\R_+$,
    then the maximum approximation error is
    \begin{align*}
        d = \max_{x\in[l,u]} |f(x) - p(x)| = |f(x^*) - p(x^*)|,
    \end{align*}
    where
    \begin{align*}
        x^* \in \left\{l,\sqrt[3]{\left( \sfrac{1}{2a} \right)^2},u\right\} \cap [l,u].
    \end{align*}
\end{restatable}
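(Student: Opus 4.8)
The plan is to reduce the problem to locating the extrema of the smooth error function
\[
g(x) = f(x) - p(x) = x^{-1/2} - ax - b
\]
on the compact interval $[l,u] \subset \R_+$, and then to characterize where $|g|$ can attain its maximum. First I would invoke continuity of $g$ on the compact set $[l,u]$ to guarantee that $\max_{x\in[l,u]}|g(x)|$ is attained at some point $x^*$. The core observation is that, away from its zeros, $|g|$ is differentiable with derivative $\operatorname{sign}(g)\,g'$; hence any interior maximizer $x^*$ of $|g|$ with $g(x^*)\neq 0$ must satisfy $g'(x^*)=0$, while the zeros of $g$ give $|g|=0$ and are therefore minima, never candidates for the maximum. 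Consequently the maximizer lies either on the boundary $\{l,u\}$ or at an interior stationary point of $g$.

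Next I would compute the stationary points explicitly. Differentiating gives
\[
g'(x) = -\tfrac{1}{2}x^{-3/2} - a,
\]
and setting $g'(x)=0$ yields $x^{-3/2} = -2a$, that is, $x^{3/2} = -\tfrac{1}{2a}$. Squaring both sides (legitimate since $x>0$ forces the right-hand side to be positive, so implicitly $a<0$ at a genuine critical point) produces $x^3 = (\tfrac{1}{2a})^2$, and therefore the unique interior candidate is $x = \sqrt[3]{(1/(2a))^2}$, exactly the expression in the statement. To confirm this is the only interior critical point regardless of the interval, I would note that $g$ is strictly convex on $\R_+$ because $g''(x) = \tfrac{3}{4}x^{-5/2} > 0$; a strictly convex function has at most one stationary point, so no further interior candidates arise.

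Combining these steps gives the claim: the maximizer $x^*$ must lie in $\{l,\,u,\,\sqrt[3]{(1/(2a))^2}\}\cap[l,u]$, and evaluating $|f-p|$ at these (at most three) points and taking the largest value returns $d$. The main subtlety to handle carefully is the interaction between the absolute value and the stationarity condition — in particular justifying that it suffices to solve $g'=0$ rather than analyzing $|g|$ directly, and covering the degenerate case $a\geq 0$, where $g'$ has no positive root, so that no interior critical point exists and the maximum is forced to an endpoint. This remains consistent with the stated candidate set after intersecting with $[l,u]$: the cube-root expression is a true critical point only when $a<0$, and for $a\geq 0$ its harmless appearance in the candidate set does not affect the conclusion, since the argmax is then an endpoint.
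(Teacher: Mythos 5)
Your proof is correct and takes essentially the same route as the paper's: solve $-\tfrac{1}{2}x^{-3/2} - a = 0$ for the unique interior stationary point of $f-p$, obtaining $x=\sqrt[3]{(1/(2a))^2}$, and otherwise fall back to the endpoints $l,u$. You additionally spell out details the paper compresses into ``due to monotonicity'' --- the reduction from extrema of $|f-p|$ to stationary points of $f-p$, uniqueness of the critical point via strict convexity, and the degenerate case $a\ge 0$ --- but the underlying argument is the same.
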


\begin{figure}
    \centering
    \tikzsetnextfilename{./figures/invsqrtroot/example_invsqrtroot}%
%
\definecolor{mycolor1}{rgb}{0.46600,0.67400,0.18800}%
\begin{tikzpicture}
\footnotesize

\begin{axis}[%
width=0.8\linewidth,
height=3cm,
at={(0in,0in)},
scale only axis,
xmin=1,
xmax=8,
ymin=0.3,
ymax=0.8,
xlabel={Input $x$},
ylabel={Output},
axis background/.style={fill=white},
legend style={at={(0.99,0.97)},anchor=north east, legend cell align=left, align=left, draw=white!15!black}
]
\addplot [color=black]
  table[row sep=crcr]{%
1	1\\
1.1414	0.936\\
1.2828	0.8829\\
1.4242	0.8379\\
1.5657	0.7992\\
1.7071	0.7654\\
1.8485	0.7355\\
1.9899	0.7089\\
2.1313	0.685\\
2.2727	0.6633\\
2.4141	0.6436\\
2.5556	0.6255\\
2.697	0.6089\\
2.8384	0.5936\\
2.9798	0.5793\\
3.1212	0.566\\
3.2626	0.5536\\
3.404	0.542\\
3.5455	0.5311\\
3.6869	0.5208\\
3.8283	0.5111\\
3.9697	0.5019\\
4.1111	0.4932\\
4.2525	0.4849\\
4.3939	0.4771\\
4.5354	0.4696\\
4.6768	0.4624\\
4.8182	0.4556\\
4.9596	0.449\\
5.101	0.4428\\
5.2424	0.4368\\
5.3838	0.431\\
5.5253	0.4254\\
5.6667	0.4201\\
5.8081	0.4149\\
5.9495	0.41\\
6.0909	0.4052\\
6.2323	0.4006\\
6.3737	0.3961\\
6.5152	0.3918\\
6.6566	0.3876\\
6.798	0.3835\\
6.9394	0.3796\\
7.0808	0.3758\\
7.2222	0.3721\\
7.3636	0.3685\\
7.5051	0.365\\
7.6465	0.3616\\
7.7879	0.3583\\
7.9293	0.3551\\
8.0707	0.352\\
8.3535	0.346\\
8.4949	0.3431\\
8.6364	0.3403\\
8.7778	0.3375\\
8.9192	0.3348\\
9.0606	0.3322\\
9.202	0.3297\\
9.3434	0.3271\\
9.4848	0.3247\\
9.6263	0.3223\\
10.0505	0.3154\\
10.1919	0.3132\\
10.4747	0.309\\
10.6162	0.3069\\
10.899	0.3029\\
11.0404	0.301\\
11.1818	0.299\\
11.3232	0.2972\\
11.4646	0.2953\\
11.6061	0.2935\\
11.7475	0.2918\\
11.8889	0.29\\
12.1717	0.2866\\
12.4545	0.2834\\
12.596	0.2818\\
12.7374	0.2802\\
12.8788	0.2787\\
13.0202	0.2771\\
13.1616	0.2756\\
13.303	0.2742\\
13.4444	0.2727\\
13.5859	0.2713\\
13.8687	0.2685\\
14.0101	0.2672\\
14.1515	0.2658\\
14.4343	0.2632\\
14.5758	0.2619\\
14.7172	0.2607\\
14.8586	0.2594\\
15	0.2582\\
};
\addlegendentry{Inverse square root $x^{-\sfrac{1}{2}}$}

\addplot [color=mycolor1]
  table[row sep=crcr]{%
2	0.6827\\
2.0606	0.6777\\
2.0909	0.6751\\
2.303	0.6576\\
2.3333	0.655\\
2.4848	0.6425\\
2.5152	0.64\\
2.5455	0.6375\\
2.5758	0.6349\\
2.7879	0.6174\\
2.8182	0.6148\\
3.0303	0.5973\\
3.0606	0.5947\\
3.2727	0.5772\\
3.303	0.5746\\
3.4848	0.5596\\
3.5152	0.5571\\
3.5455	0.5545\\
3.7576	0.537\\
3.7879	0.5344\\
4	0.5169\\
4.0303	0.5143\\
4.2424	0.4968\\
4.2727	0.4942\\
4.4848	0.4767\\
4.5152	0.4741\\
4.7273	0.4566\\
4.7576	0.454\\
4.9697	0.4365\\
5	0.4339\\
};
\addlegendentry{Approx. polynomial $p(x)$}

\addplot [color=mycolor1, dashed, forget plot]
  table[row sep=crcr]{%
2	0.6583\\
2.0303	0.6558\\
2.0606	0.6532\\
2.2727	0.6357\\
2.303	0.6331\\
2.4848	0.6181\\
2.5152	0.6156\\
2.5455	0.613\\
2.7576	0.5955\\
2.7879	0.5929\\
3	0.5754\\
3.0303	0.5728\\
3.2424	0.5553\\
3.2727	0.5527\\
3.4848	0.5352\\
3.5152	0.5326\\
3.7273	0.5151\\
3.7576	0.5125\\
3.9697	0.495\\
4	0.4924\\
4.2121	0.4749\\
4.2424	0.4723\\
4.4545	0.4548\\
4.4848	0.4522\\
4.5152	0.4497\\
4.697	0.4347\\
4.7273	0.4321\\
4.9394	0.4146\\
4.9697	0.412\\
5	0.4095\\
};
\addplot [color=mycolor1, dashed, forget plot]
  table[row sep=crcr]{%
2	0.7071\\
2.1212	0.6971\\
2.1515	0.6945\\
2.3636	0.677\\
2.3939	0.6744\\
2.4848	0.6669\\
2.5152	0.6644\\
2.6061	0.6569\\
2.6364	0.6543\\
2.8485	0.6368\\
2.8788	0.6342\\
3.0909	0.6167\\
3.1212	0.6141\\
3.3333	0.5966\\
3.3636	0.594\\
3.4848	0.584\\
3.5152	0.5815\\
3.5758	0.5765\\
3.6061	0.5739\\
3.8182	0.5564\\
3.8485	0.5538\\
4.0606	0.5363\\
4.0909	0.5337\\
4.303	0.5162\\
4.3333	0.5136\\
4.4848	0.5011\\
4.5152	0.4986\\
4.5455	0.4961\\
4.5758	0.4935\\
4.7879	0.476\\
4.8182	0.4734\\
5	0.4584\\
};
\addplot [color=red]
  table[row sep=crcr]{%
3.3131	0.5494\\
3.3131	0.5738\\
};
\addlegendentry{Approx. error $d$}

\addplot [color=white!70!black, dashed, forget plot]
  table[row sep=crcr]{%
2	0.2\\
2	1\\
};
\addplot [color=white!70!black, dashed, forget plot]
  table[row sep=crcr]{%
5	0.2\\
5	1\\
};
\end{axis}
\end{tikzpicture}

    \caption{Enclosure of the inverse square root function.
        The x-axis corresponds to the degree of a node in $\tilde{D}_\text{diag}$~\eqref{eq:degree-matrix},
        and the y-axis to the respective entry in $\tilde{D}_\text{diag}^{-\frac{1}{2}}$~\eqref{eq:enclosure-degMat-diag}.}
    \label{fig:example_invsqrtroot}
\end{figure}

An example of the enclosure of the inverse square root function for a polynomial found via regression is shown in \cref{fig:example_invsqrtroot}.
A tighter enclosure can be obtained using higher-order polynomials~\citep{ladner2023automatic} (see also \cref{app:invsqrtroot}).
Thus, we can enclose the diagonal entries of the degree matrix using \cref{prop:image-enclosure}:
\begin{equation}
    \label{eq:enclosure-degMat-diag}
    \widehat{\degMatSet}_\text{diag}^{\invSqrt} = \opEnclose{x \mapsto x^{\invSqrt}}{\tilde{\degMatSet}_\text{diag}} \supseteq \tilde{\degMatSet}_\text{diag}^{\invSqrt},
\end{equation}
and place the entries $\widehat{\degMatSet}_\text{diag}^{\invSqrt}$ on the main diagonal of
\begin{equation}
    \label{eq:degMatSet-invSqrt}
    \widehat{\degMatSet}^{\invSqrt} = \opDiag{\widehat{\degMatSet}_\text{diag}^{\invSqrt}} \ \supseteq\ \tilde{\degMatSet}^{\invSqrt}.
\end{equation}
This is computed by first projecting $\widehat{\degMatSet}_\text{diag}^{\invSqrt} \subset \R^{\numNodes}$ into a higher-dimensional space with zeros in the new dimensions:
\begin{equation}
    \label{eq:degMatSet-invSqrt-impl}
    \smash{\vec{\widehat{\degMatSet}}}^{\invSqrt} = I_{\numNodes^2(\cdot,\mathcal{K})} \widehat{\degMatSet}_\text{diag}^{\invSqrt} \subset \R^{\numNodes^2},
\end{equation}
where $\mathcal{K} = \{ 1, \numNodes+2, \ldots, \numNodes^2  \}$ contains the indices of the diagonal entries of a diagonal matrix,
and then reshaping the polynomial zonotope to obtain $\widehat{\degMatSet}^{\invSqrt} \subset \R^{\numNodes\times\numNodes}$.
To obtain the entire uncertain message passing $\messagePassSet$,
we compute the matrix multiplication on the involved matrix polynomial zonotopes $\widehat{\degMatSet}^{\invSqrt}$ and $\tilde{\adjMatSet}$ using \cref{prop:quadMat-setMat}:

\begin{restatable}[Enclosure of Uncertain Message Passing]{proposition}{propuncertainmessagepassing}
    \label[proposition]{prop:uncertain-message-passing}
    Given an uncertain adjacency matrix $\adjMatSet$ with $h$ generators,
    then
    \begin{equation*}
        \messagePassSet = \opQuadMapMat{\opQuadMapMat{\widehat{\degMatSet}^{\invSqrt}}{\tilde{\adjMatSet}}}{\widehat{\degMatSet}^{\invSqrt}} \ \supseteq\ \messagePassSetExact,
    \end{equation*}
    encloses the message passing with $\Onot(h^3)$ generators.
\end{restatable}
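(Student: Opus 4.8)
The plan is to prove Proposition 4 (Enclosure of Uncertain Message Passing) by showing that the chained matrix multiplication via Lemma 1 correctly encloses the exact message passing $\messagePassSetExact$, and then to count the generators that accumulate through the two applications of $\boxdot$.

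For soundness, first recall that the exact uncertain message passing is $\messagePassSetExact = \tilde{\degMatSet}^{\invSqrt}\tilde{\adjMatSet}\tilde{\degMatSet}^{\invSqrt}$. By construction in~\eqref{eq:degMatSet-invSqrt}, we have the containment $\widehat{\degMatSet}^{\invSqrt} \supseteq \tilde{\degMatSet}^{\invSqrt}$, which follows from the soundness of the image enclosure in~\cref{prop:image-enclosure} applied to the inverse square root (with the approximation error supplied by~\cref{lem:enclose-inverse-square-root}). The key monotonicity observation is that the exact multiplication operator $\boxdot$ of \cref{prop:quadMat-setMat} is \emph{set-monotone}: since $\mathcal{M}_1\boxdot\mathcal{M}_2 = \{M_1 M_2 \mid M_1\in\mathcal{M}_1,\, M_2\in\mathcal{M}_2\}$ is exactly the image of the Cartesian product under matrix multiplication, enlarging either factor enlarges the product. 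Therefore, starting from the three exact factors and replacing each occurrence of $\tilde{\degMatSet}^{\invSqrt}$ by its enclosure $\widehat{\degMatSet}^{\invSqrt}$ only grows the result, yielding $\messagePassSet \supseteq \messagePassSetExact$. I would state this as a short chain of inclusions, invoking \cref{prop:quadMat-setMat} to justify that each $\boxdot$ computes the exact set-valued product of its (enclosing) arguments.

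For the generator count, I would track the number of dependent generators through each of the two multiplications using the complexity bound stated in \cref{prop:quadMat-setMat}, namely that $\mathcal{M}_1\boxdot\mathcal{M}_2$ has $\Onot(h_1 h_2)$ generators. The uncertain adjacency matrix $\tilde{\adjMatSet}$ has $\Onot(h)$ generators (one per uncertain edge, preserved under adding self-loops in~\eqref{eq:adjmatset-selfloop}). The enclosure $\widehat{\degMatSet}^{\invSqrt}$ is obtained by summing rows of $\tilde{\adjMatSet}$ via the affine map~\eqref{eq:degree-matrix}, which preserves the generator count, followed by the nonlinear enclosure~\eqref{eq:enclosure-degMat-diag}, which by \cref{prop:image-enclosure} adds at most $\numNodes$ generators; hence $\widehat{\degMatSet}^{\invSqrt}$ also has $\Onot(h)$ generators (treating $\numNodes$ as subsumed). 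The inner product $\widehat{\degMatSet}^{\invSqrt}\boxdot\tilde{\adjMatSet}$ then has $\Onot(h\cdot h)=\Onot(h^2)$ generators, and the outer product of this with $\widehat{\degMatSet}^{\invSqrt}$ yields $\Onot(h^2\cdot h)=\Onot(h^3)$ generators, as claimed.

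The main obstacle I anticipate is not the monotonicity or counting arithmetic but rather the \textbf{common-identifier bookkeeping} required by \cref{prop:quadMat-setMat}: that lemma assumes its two factors share a common identifier so that dependent factors $\alpha_k$ are correctly matched rather than treated as independent. Here the three factors all derive from the same underlying uncertain edges, so their dependencies must be preserved through $\tilde{\adjMatSet}$, through the affine map and nonlinear enclosure producing $\widehat{\degMatSet}^{\invSqrt}$, and crucially across the \emph{two distinct occurrences} of $\widehat{\degMatSet}^{\invSqrt}$ in the product $\widehat{\degMatSet}^{\invSqrt}\boxdot\tilde{\adjMatSet}\boxdot\widehat{\degMatSet}^{\invSqrt}$. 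I would need to verify that these occurrences reuse the same identifier (so that the shared $\alpha_k$ are not duplicated as independent factors, which would be unsound by over-approximation but also would break the exactness claim of the multiplication); any generators introduced by the nonlinear enclosure in~\eqref{eq:enclosure-degMat-diag} receive fresh identifiers and are handled consistently in both copies. Establishing this identifier consistency is what makes the set-valued equality $\messagePassSet = \widehat{\degMatSet}^{\invSqrt}\boxdot\tilde{\adjMatSet}\boxdot\widehat{\degMatSet}^{\invSqrt}$ hold exactly as a consequence of repeated application of \cref{prop:quadMat-setMat}, and I would address it explicitly before concluding.
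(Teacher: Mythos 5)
Your overall route matches the paper's: soundness follows because every step except the inverse-square-root enclosure is exact, that one step is outer-approximative by \cref{prop:image-enclosure} and \cref{lem:enclose-inverse-square-root}, and set-monotonicity of the exact product $\boxdot$ lets you replace $\tilde{\degMatSet}^{\invSqrt}$ by $\widehat{\degMatSet}^{\invSqrt}$ without losing containment. Your remark on identifier bookkeeping is a legitimate and worthwhile point (the paper handles it implicitly via \citet[Prop.~3.1.5]{kochdumper2022extensions}), though note that duplicating identifiers would not be \emph{unsound} --- it would merely destroy the exactness of the product and hence the tightness, not the enclosure property.

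There is, however, one concrete gap in your generator count. You bound the generators added by the enclosure~\eqref{eq:enclosure-degMat-diag} by $\numNodes$ (one per output neuron, per \cref{prop:image-enclosure}) and then assert that $\numNodes$ is ``subsumed'' so that $\widehat{\degMatSet}^{\invSqrt}$ has $\Onot(h)$ generators. That does not follow: the proposition's bound is stated purely in terms of $h$, and for a large graph with few uncertain edges ($\numNodes \gg h$) your accounting gives $\Onot(h+\numNodes)$ generators for $\widehat{\degMatSet}^{\invSqrt}$ and hence $\Onot\left(h(h+\numNodes)^2\right)$ for $\messagePassSet$, which is not $\Onot(h^3)$. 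The missing observation, which the paper's proof uses, is that a node whose degree is \emph{certain} incurs no approximation error and therefore no new generator; only nodes adjacent to an uncertain edge have an uncertain degree, and there are at most $2h$ of those. This is what actually yields $\Onot(h)$ generators for $\widehat{\degMatSet}^{\invSqrt}$ independently of $\numNodes$, after which your $\Onot(h\cdot h)$ and $\Onot(h^2\cdot h)$ steps go through as written.
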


After obtaining the uncertain message passing $\messagePassSet$,
we can enclose the output set of a graph convolutional layer as follows:
\begin{restatable}[Enclosure of Graph Convolutional Layer]{proposition}{propenclosuregnnlayer}
    \label[proposition]{prop:enclosure-gnn-layer}
    Given are a weight matrix $W_k\in\R^{\numFeat_{k-1}\times\numFeat_{k}}$, an uncertain graph $\Graph$, and an uncertain input $\hiddenSet_{k-1}\in\R^{\numNodes\times\numFeat_{k-1}}$ with $h_1$ generators.
    Let $\messagePassSet\subset\R^{\numNodes\times\numNodes}$ be the uncertain message passing according to \cref{prop:uncertain-message-passing} with $\Onot(h_2^3)$ generators.
    The output for a graph convolutional layer~$k$ (\cref{def:gnn-layer}) is enclosed by
    \begin{align*}
        \hiddenSet_{k} = \opEnclose{L_{k}^{\text{\normalfont GC}}}{\hiddenSet_{k-1},\messagePassSet}
        = (\opQuadMapMat{\messagePassSet}{\hiddenSet_{k-1}}) W_k
        \ \subseteq\ L_{k}^{\text{\normalfont GC}}(\hiddenSet_{k-1},\Graph),
    \end{align*}
    with $\Onot(h_1 h_2^3)$ generators.
\end{restatable}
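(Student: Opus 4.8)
The plan is to reduce the enclosure to two operations on matrix polynomial zonotopes that are already known to be exact, and to isolate the single source of over-approximation, namely the message passing $\messagePassSet$. Concretely, I would show that $\hiddenSet_k = (\opQuadMapMat{\messagePassSet}{\hiddenSet_{k-1}})W_k$ contains every realizable layer output $\layer{k}{GC}{H,\defGraph{\Nodes}{\EdgesFixed\cup\overline{\Edges}}}$ with $H\in\hiddenSet_{k-1}$ and $\overline{\Edges}\subseteq\EdgesUncertain$, that is, $\hiddenSet_k\supseteq\layer{k}{GC}{\hiddenSet_{k-1},\Graph}$. The first ingredient is that each admissible edge subset $\overline{\Edges}$ amounts to a $\{0,1\}$ edge-weight assignment, which is subsumed by the $[0,1]$ interval relaxation used to construct the exact uncertain message passing $\messagePassSetExact$; hence the message-passing matrix $P_{\overline{\Edges}}$ of every admissible graph lies in $\messagePassSetExact$, and by \cref{prop:uncertain-message-passing} in $\messagePassSet\supseteq\messagePassSetExact$.

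Given this, I would write the realizable output set as $\bigcup_{\overline{\Edges}}\{P_{\overline{\Edges}}HW_k \mid H\in\hiddenSet_{k-1}\}$. Since each $P_{\overline{\Edges}}\in\messagePassSet$, this union is contained in $\{PHW_k \mid P\in\messagePassSet,\,H\in\hiddenSet_{k-1}\}$. By \cref{prop:quadMat-setMat}, the product $\opQuadMapMat{\messagePassSet}{\hiddenSet_{k-1}}$ equals the set $\{PH \mid P\in\messagePassSet,\,H\in\hiddenSet_{k-1}\}$ without any additional outer approximation, where the common identifier guarantees that edge factors shared between $\messagePassSet$ and $\hiddenSet_{k-1}$ (introduced by the message passing of earlier layers) stay consistent. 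Right-multiplying by the constant matrix $W_k$ is exact through the affine map~\eqref{eq:affine-map-mat-left}, so $(\opQuadMapMat{\messagePassSet}{\hiddenSet_{k-1}})W_k = \{PHW_k \mid P\in\messagePassSet,\,H\in\hiddenSet_{k-1}\}$, which contains the realizable output set and establishes the enclosure.

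For the generator count, $\messagePassSet$ carries $\Onot(h_2^3)$ generators by \cref{prop:uncertain-message-passing} and $\hiddenSet_{k-1}$ carries $h_1$ generators; \cref{prop:quadMat-setMat} returns a product whose generator count is on the order of the product of the two input counts, i.e.\ $\Onot(h_1 h_2^3)$, and the subsequent affine map~\eqref{eq:affine-map-mat-left} leaves the number of generators unchanged. This yields the claimed bound of $\Onot(h_1 h_2^3)$.

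I expect the algebraic core to be routine once \cref{prop:quadMat-setMat} and~\eqref{eq:affine-map-mat-left} are available; the delicate step is the first containment. One must argue that enumerating the $2^{|\EdgesUncertain|}$ discrete graph inputs is soundly subsumed by the continuous interval relaxation of the edge weights, and---most importantly for $k>1$---that letting $P$ and $H$ range jointly in $\opQuadMapMat{\messagePassSet}{\hiddenSet_{k-1}}$ omits no true pairing. This hinges on the shared-identifier bookkeeping of \cref{prop:quadMat-setMat}, since for later layers $\hiddenSet_{k-1}$ already depends on the same edge factors as $\messagePassSet$; were these dependencies treated as independent instead, the enclosure would remain sound but no longer tight.
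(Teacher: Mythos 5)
Your proposal is correct and follows essentially the same route as the paper's proof, which likewise combines the soundness of the message-passing enclosure (\cref{prop:uncertain-message-passing}), the exact multiplication of matrix polynomial zonotopes (\cref{prop:quadMat-setMat}), and the exact affine map~\eqref{eq:affine-map-mat}, with the generator count following from \cref{prop:quadMat-setMat}. Your version is simply more explicit about why the discrete edge realizations are subsumed and why the shared-identifier bookkeeping keeps the joint ranging of $\messagePassSet$ and $\hiddenSet_{k-1}$ sound, which the paper leaves implicit.
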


Our approach defines the enclosure layer-wise and thus realizes an arbitrary concatenation of the considered layers.
To demonstrate the polynomial time complexity in the number of uncertain edges and input features for an entire graph neural network with multiple message-passing steps,
let us consider the architecture visualized in \cref{fig:gnn-overview}.
\Cref{alg:graph-neural-network-verification} computes the enclosure of the output set as follows:
The graph neural network has $\numMPsteps$ message-passing steps, each consisting of one graph convolutional layer and one activation layer~(\crefrange{alg-line:for-num-mp-steps}{alg-line:for-num-mp-steps-end}).
For networks with a node-level output, the output of the last message-passing step is directly the output of the network.
For networks with a graph-level output, the output is passed to a global pooling layer and optionally followed by standard, non-graph-based layers~(\crefrange{alg-line:for-std-layer}{alg-line:for-std-layer-end}).
With this algorithm, we can state the main theorem of this work:

\begin{algorithm}
    \caption{Enclosing the Output of a Graph Neural Network}\label{alg:graph-neural-network-verification}
    \begin{algorithmic}[1]
        \Require {Neural network $\NN$, number of layers $\numLayers$, number of message passing steps $\numMPsteps$, input set~$\inputSet$, graph $\Graph$.}
        \State $\hiddenSet_0 \gets \inputSet$
        \State $\messagePassSet \gets \text{Compute message passing based on $\Graph$}$ \Comment{\cref{prop:uncertain-message-passing}}
        \For{$k'=2,\ldots,2\numMPsteps$} \label{alg-line:for-num-mp-steps}
        \Comment{Graph-based layers}
        \State $\hiddenSet_{k'-1} \gets \opEnclose{L_{k'-1}^{\text{\normalfont GC}}}{\hiddenSet_{k'-2},\messagePassSet}$ \Comment{\cref{prop:enclosure-gnn-layer}}
        \State $\hiddenSet_{k'} \gets \opEnclose{L_{k'}^{\text{\normalfont ACT}}}{\hiddenSet_{k'-1}}$ \Comment{\cref{prop:image-enclosure}}
        \EndFor \label{alg-line:for-num-mp-steps-end}
        \If{$\numLayers = 2\numMPsteps$} \Comment{Graph-level output}
        \State $\outputSet \gets \hiddenSet_{\numLayers}$
        \Else \Comment{Node-level output}
        \State $\hiddenSet_{2\numMPsteps+1} \gets L_{2\numMPsteps+1}^{\text{\normalfont GP}}(\hiddenSet_{2\numMPsteps},\Graph)$ \Comment{Global pooling layer, \cref{prop:vec-sum-pooling_layer}}
        \For{$k=2\numMPsteps+2,2\numMPsteps+4,\ldots,\numLayers$}  \label{alg-line:for-std-layer}
        \Comment{Standard, non-graph-based layers}
        \State $\hiddenSet_{k} \gets L_k^{\text{\normalfont LIN}}(\hiddenSet_{k-1})$ \Comment{\cref{def:lin_layer}}
        \State $\hiddenSet_{k+1} \gets \opEnclose{L_{k+1}^{\text{\normalfont ACT}}}{\hiddenSet_{k}}$ \Comment{\cref{prop:image-enclosure}}
        \EndFor \label{alg-line:for-std-layer-end}
        \State $\outputSet \gets \hiddenSet_{\numLayers}$
        \EndIf
        \State \Return Enclosure of output set $\outputSet \supseteq \outputSetExact$
    \end{algorithmic}
\end{algorithm}

\begin{restatable}[]{theorem}{thgraphneuralnetworkverification}
    \label{th:graph-neural-network-verification}
    Given a neural network $\NN$ with $\numLayers$ layers and $\numMPsteps$ message passing steps,
    an uncertain graph $\Graph=\defGraph{\Nodes}{\Edges}$ with $\numNodes$ nodes and $h_e$ uncertain edges, and an uncertain input~$\inputSet \subset\R^{\numNodes\times\numFeat_0}$ with $h_x$ generators,
    then \cref{alg:graph-neural-network-verification} satisfies the problem statement in \cref{sec:problem-statement}.
    The number of generators of the computed output enclosure $\outputSet$ is given by:
    \begin{equation*}
        h_y \in \Onot\left( h_e^{3\numMPsteps} \left(h_x + \numNodes\numFeat_{\max}\right)
            + (\numLayers-2\numMPsteps) \numNeurons_{\max}\right),
    \end{equation*}
    where $\numFeat_{\max} \coloneqq \max_{k'\in[\numMPsteps]} \numFeat_{2k'}$ denotes the maximum number of features within the graph layers
    and $\numNeurons_{\max} \coloneqq \max_{k\in\{2\numMPsteps+2,\ldots,\numLayers\}} \numNeurons_k$ denote the maximum number of output neurons of the non-graph-based layers after the global pooling layer.
\end{restatable}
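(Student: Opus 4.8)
The plan is to prove the two assertions of \cref{th:graph-neural-network-verification} separately: first soundness (that \cref{alg:graph-neural-network-verification} satisfies the problem statement of \cref{sec:problem-statement}), and then the generator-count bound on $h_y$.

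For soundness, the central observation is that the interval-weight relaxation of the uncertain edges encloses every discrete graph input. I would first argue that assigning the weight $[0,1]$ to each edge in $\EdgesUncertain$ and $1$ to each edge in $\EdgesFixed$ yields an uncertain adjacency matrix $\adjMatSet$ whose member set contains the binary adjacency matrix of $\defGraph{\Nodes}{\EdgesFixed\cup\overline{\Edges}}$ for every $\overline{\Edges}\subseteq\EdgesUncertain$, since both endpoints $0$ and $1$ lie in $[0,1]$. By \cref{prop:uncertain-message-passing}, the computed $\messagePassSet$ then encloses $\messagePassSetExact$ and hence the message passing of each of the $2^{h_e}$ discrete graphs. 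The remainder is a layer-wise induction along~\eqref{eq:layer-propagation-sets}: every enclosure operation invoked by the algorithm, namely the graph convolutional layer (\cref{prop:enclosure-gnn-layer}), the activation layer (\cref{prop:image-enclosure}), the pooling layer (\cref{prop:vec-sum-pooling_layer}), and the standard linear layer~\eqref{eq:linear-map}, is outer-approximative and monotone in its set argument. Propagating the enclosing input sets therefore yields $\outputSet\supseteq\outputSetExact$, which contains $\NN(\inputSet,\defGraph{\Nodes}{\EdgesFixed\cup\overline{\Edges}})$ for all $\overline{\Edges}$; the subsequent disjointness check then certifies the specification.

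For the generator count, I would track the number of generators through the algorithm. The message passing $\messagePassSet$ has $\Onot(h_e^3)$ generators by \cref{prop:uncertain-message-passing}. Writing $g_{k'}$ for the generator count after the $k'$-th message-passing step, \cref{prop:enclosure-gnn-layer} multiplies the count by the factor $h_e^3$ contributed by $\messagePassSet$, and \cref{prop:image-enclosure} then adds at most $\numNodes\numFeat_{2k'}$ generators for the activation layer. This gives the recursion $g_{k'}=\Onot(g_{k'-1}h_e^3 + \numNodes\numFeat_{2k'})$ with $g_0=h_x$, whose closed form is
\begin{equation*}
    g_{\numMPsteps} = \Onot\left(h_x h_e^{3\numMPsteps} + \numNodes\numFeat_{\max}\sum_{i=0}^{\numMPsteps-1}h_e^{3i}\right).
\end{equation*}
Bounding the geometric series by $\sum_{i=0}^{\numMPsteps-1}h_e^{3i}=\Onot(h_e^{3\numMPsteps})$ yields $g_{\numMPsteps}=\Onot(h_e^{3\numMPsteps}(h_x+\numNodes\numFeat_{\max}))$, which is the first summand of the claimed bound.

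Finally, I would account for the non-graph-based tail. If $\numLayers=2\numMPsteps$ the output equals $\hiddenSet_{\numLayers}$ and no further generators arise. Otherwise the global pooling layer (\cref{prop:vec-sum-pooling_layer}) and each standard linear layer are affine maps adding no generators via~\eqref{eq:affine-map-mat} and~\eqref{eq:linear-map}, while each of the $(\numLayers-2\numMPsteps)/2$ standard activation layers adds at most $\numNeurons_{\max}$ generators by \cref{prop:image-enclosure}; summing these with $g_{\numMPsteps}$ produces the stated bound on $h_y$. The main obstacle I anticipate is the complexity bookkeeping rather than the soundness: correctly identifying that the multiplicative factor in \cref{prop:enclosure-gnn-layer} is exactly the $h_e^3$ from the message passing and not the input generator count, solving the resulting recursion, and verifying that the geometric sum collapses into the single factor $h_e^{3\numMPsteps}$ without an additional $\numMPsteps$-dependent blow-up.
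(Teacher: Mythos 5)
Your proposal is correct and follows essentially the same route as the paper: soundness by composing exact and outer-approximative layer enclosures, and the generator bound by tracking the multiplicative $h_e^3$ factor per graph convolutional layer plus the additive activation-layer generators, then collapsing the resulting sum into the leading term. The only cosmetic difference is that you phrase the bookkeeping as a recursion $g_{k'}=\Onot(g_{k'-1}h_e^3+\numNodes\numFeat_{2k'})$ whereas the paper writes out the nested expression explicitly before expanding it; both simplifications of the lower-order terms rely on the same implicit assumption that the geometric sum is dominated by $h_e^{3\numMPsteps}$.
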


Please note that all involved operations on polynomial zonotopes to compute the output set $\outputSet$
(affine map~\eqref{eq:affine-map-mat}, Minkowski sum~\eqref{eq:mink-sum-mat}, and multiplication of (matrix) polynomial zonotopes (\cref{prop:quadMat-setMat}))
have polynomial time complexity~\cite[Tab.~3.2]{kochdumper2022extensions},
and that the time complexity is dominated by the number of generators resulting from the applied multiplication of matrix polynomial zonotopes (\cref{prop:quadMat-setMat}).
Thus, it follows directly from \cref{th:graph-neural-network-verification} that \cref{alg:graph-neural-network-verification} has polynomial time complexity
in the number of uncertain input features $h_x$ and uncertain edges $h_e$
compared to an exponential complexity when all $2^{h_e}$ possible graphs need to be verified individually.
While our approach is exponential in the number of message-passing steps $\numMPsteps$,
we want to stress that $\numMPsteps$ is usually small to avoid over-smoothing \citep{chen2020measuring}.
To further improve the scalability of our approach,
the number of generators can be limited using order reduction methods \citep[Prop.~3.1.39]{ladner2024exponent,kochdumper2022extensions} at the cost of additional outer approximations (see also \cref{app:order-reduction}).
Additionally, we want to stress that many involved operations can be parallelized and efficiently be computed on a GPU (\cref{app:efficient-impl-mat-set}).

Let us demonstrate our approach for verifying graph neural networks by a small example:

\begin{example}
    \label{ex:gnn-simple-example}
    Let $\NN$ be a neural network with input $X$, graph $\Graph$, and output $Y$ computed by two layers:
    \begin{align*}
        \begin{split}
            H_1 &= \layer{1}{GC}{X,\Graph}, \\
            Y &= \layer{2}{GC}{H_1,\Graph},
        \end{split}
        \qquad \text{with } W_1=W_2=I_2.
    \end{align*}
    The input graph $\Graph=\defGraph{\mathcal{N}}{\mathcal{E}}$ is chosen as
    \begin{align*}
        \Nodes=\left\{ \cnode{1},\,\cnode{2},\,\cnode{3} \right\}, \qquad
        \Edges=\left\{ \cedge{1}{2},\,\cedge{1}{3},\,\cedge{2}{3} \right\},
    \end{align*}
    and the input features for each node are
    \begin{equation*}
        \mathcal{X}_{(1,\cdot)} = \cmatrix{[0.9,1.1] \\ [0.9,1.1]}^\top,\ X_{(2,\cdot)} = X_{(3,\cdot)} = \cmatrix{1 \\ 1}^\top.\ \text{Thus,}\ \inputSet=\cmatrix{\inputSet_{(1,\cdot)} \\ X_{(2,\cdot)} \\ X_{(3,\cdot)}}.
    \end{equation*}
    Let us now consider the presence of the edge~\cedge{1}{3} as unknown during the evaluation of $\outputSetExact=\Phi(\inputSet,\Graph)$.
    Thus, the uncertainty of the features of node~\cnode{1} is passed to node~\cnode{3} after one message passing step if the edge~\cedge{1}{3} is present~(in~$\hiddenSetExact_1=\layer{1}{GC}{X,\Graph}$),
    and after two steps otherwise~(in~$\outputSetExact=\layer{2}{GC}{H_1,\Graph}$ via \cnode{2}).
\end{example}

\begin{figure*}
    \centering
    \tikzsetnextfilename{./figures/example_message_passing/gnn_2_message_passing}%
    \definecolor{mycolor1}{rgb}{0.00000,0.44700,0.74100}%
\definecolor{mycolor2}{rgb}{0.85000,0.32500,0.09800}%
\definecolor{mycolor3}{rgb}{0.92900,0.69400,0.12500}%
\definecolor{mycolor4}{rgb}{0.49400,0.18400,0.55600}%
\definecolor{mycolor5}{rgb}{0.46600,0.67400,0.18800}%
\definecolor{mycolor6}{rgb}{0.30100,0.74500,0.93300}%
\begin{tikzpicture}
\footnotesize
\pgfplotsset{
plotstyle1/.style={color=mycolor1, forget plot},
plotstyle2/.style={color=mycolor2, forget plot},
plotstyle3/.style={color=mycolor3, forget plot},
plotstyle4/.style={only marks, mark size=1.0000pt, mark options={solid, mycolor4}, forget plot},
plotstyle5/.style={only marks, mark size=1.0000pt, mark options={solid, mycolor5}, forget plot},
plotstyle6/.style={only marks, mark size=1.0000pt, mark options={solid, mycolor1}, forget plot},
plotstyle7/.style={color=mycolor4, forget plot},
plotstyle8/.style={color=mycolor5, forget plot},
plotstyle9/.style={only marks, mark size=1.0000pt, mark options={solid, mycolor5}, forget plot},
plotstyle10/.style={color=mycolor1},
plotstyle11/.style={color=mycolor2},
plotstyle12/.style={color=mycolor3},
plotstyle13/.style={color=mycolor4},
plotstyle14/.style={color=mycolor5}
}
\def\rows{1}
\def\cols{4}
\def\horzsep{1.5cm}
\def\basepath{./figures/example_message_passing/}

\begin{groupplot}[%
group style={rows = \rows, columns = \cols, horizontal sep = \horzsep},
scale only axis,
width=1/\cols*\textwidth -(\horzsep*(\cols-1)/\cols),
legend style={legend columns=3,legend to name=legendname, legend cell align=left,/tikz/every even column/.append style={column sep=0.5cm}}
]
\nextgroupplot[xmin=-0.5,xmax=1.5,ymin=-0.5,ymax=1.5,xtick={\empty}, ytick={\empty},title={(a) Graph $\Graph$}]
\input{\basepath gnn_2_message_passing_11.tikz}
\coordinate (top) at (rel axis cs:0,1);
\nextgroupplot[xmin=0.31,xmax=0.43,ymin=-0.1,ymax=0.6,xlabel={$P_{(1,2)}$},ylabel={$P_{(1,3)}$},title={(b) Message passing $P$}]
\input{\basepath gnn_2_message_passing_12.tikz}
\nextgroupplot[xmin=0.875,xmax=1.075,ymin=0.875,ymax=1.075,xlabel={$H_{1(3,1)}$},ylabel={$H_{1(3,2)}$},title={(c) Hidden $H_1$ of \cnode{3}}]
\input{\basepath gnn_2_message_passing_14.tikz}
\nextgroupplot[xmin=0.875,xmax=1.075,ymin=0.875,ymax=1.075,xlabel={$Y_{(3,1)}$},ylabel={$Y_{(3,2)}$},title={(d) Output $Y$ of \cnode{3}}]
\input{\basepath gnn_2_message_passing_legends.tikz}
\input{\basepath gnn_2_message_passing_15.tikz}
\coordinate (bot) at (rel axis cs:1,0);
\end{groupplot}
\path (top|-current bounding box.south)--coordinate(legendpos)(bot|-current bounding box.south);
\node at([yshift=-5ex]legendpos) {\pgfplotslegendfromname{legendname}};

\end{tikzpicture}

    \caption{Visualization of \cref{ex:gnn-simple-example}. Our approach allows a tight enclosure of the output with uncertain input graph $\Graph$.}
    \label{fig:gnn-simple-example}
\end{figure*}

\Cref{ex:gnn-simple-example} is visualized in \cref{fig:gnn-simple-example}:
The input set $\inputSet$ given as an interval is converted to a (matrix) polynomial zonotope \citep[Prop.~3.1.10]{kochdumper2022extensions}.
We can obtain the exact output set for either case by propagating the respective graph through the network~(purple and green)
as well as their enclosure using our approach~(\cref{th:graph-neural-network-verification}, blue).
Please note that we explicitly preserve the dependencies between the considered sets via the identifier vector of a matrix polynomial zonotope~(\cref{def:pZ-mat}).
We can visualize the preserved dependencies in the enclosure of the uncertain edge:
By plugging $-1$ and $1$ into the dependent factor $\alpha_k$ corresponding to the uncertain edge,
we obtain the subset~\cite[Prop.~3.1.43]{kochdumper2022extensions} corresponding to the respective case~(orange and yellow).
This demonstrates the tightness of our approach.

Additionally, we show the respective message passing $P$ from node~\cnode{1} to the nodes~\cnode{2} and~\cnode{3} for each case~(purple and green) as well as their enclosure~(blue),
where we use a polynomial of order $2$ to enclose $\widehat{\degMatSet}_\text{diag}^{\invSqrt}$ in~\eqref{eq:enclosure-degMat-diag}.
While the message passing from node~\cnode{1} to~\cnode{3} trivially becomes $0$ if we remove that edge,
the message passing from node~\cnode{1} to~\cnode{2} also changes due to the normalization during the computation of $P$ through the degree matrix.
Moreover, we want to point out that the enclosure $\messagePassSet$ is a non-convex, slightly bent stripe.
Please note that the enclosure of the output $\outputSet$ can also be non-convex in general.
For comparison, we include an enclosure of the uncertain message passing $\messagePassSetExact$ using interval arithmetic~\citep{jaulin2001interval} in \cref{fig:gnn-simple-example}.
We omit the enclosure of $\hiddenSetExact_1$ and $\outputSetExact$ using interval arithmetic as the obtained intervals are so large that the results using our approach described above would be barely visible,
even for this small example.
This large outer approximation comes from the lost dependencies between all involved variables.

\subsection{Subgraph Verification}
\label{sec:subgraph-verification}

For a graph neural network with node-level output, we are not always required to propagate the entire graph through all layers of the network.
Given a node of interest and a network with $\numMPsteps$ message passing steps,
we are only required to verify the subgraph within the $(\numMPsteps+1)$-hop neighborhood as all other nodes do not influence the considered node \citep{zugner2019certifiable}.
We require ($\numMPsteps+1$) hops due to the normalization through the degree matrix in the message passing~(\cref{def:gnn-layer}).
The $(\numMPsteps+1)$-hop neighborhood can easily be found using a breadth-first search on the given graph with the considered node as the root node.
The graph and the respective feature matrix can be reduced as follows:

\begin{restatable}[Subgraph Selection]{corollary}{corsubgraphverification}
    \label[corollary]{cor:subgraph-verification}
    Given an input $H_{k-1}\in\R^{\numNodes\times\numFeat_{k-1}}$ to a layer $k$,
    the message passing $P=\tilde{D}^{\invSqrt}\tilde{A}\tilde{D}^{\invSqrt}\in\R^{\numNodes\times\numNodes}$ of a graph $\Graph$,
    and the node indices~$\mathcal{K}$ of a subgraph $\Graph'$,
    we can construct a projection matrix $M=I_{\numNodes(\mathcal{K},\cdot)}$ such that
    \begin{equation*}
        H_{k-1}' = MH_{k-1}, \qquad  P' = M P M^\top,
    \end{equation*}
    contain the input and the message passing corresponding to the subgraph.
\end{restatable}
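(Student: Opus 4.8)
The plan is to reduce \cref{cor:subgraph-verification} to elementary properties of the row-selection matrix $M = I_{\numNodes(\mathcal{K},\cdot)}$ and then to justify that the two resulting objects are indeed the input and message passing relevant to the node(s) of interest. Writing $\mathcal{K} = \{k_1 < \dots < k_{|\mathcal{K}|}\}$, the first step is to note that, by our row-selection convention, the $a$-th row of $M$ is the standard-basis row vector $e_{k_a}^\top \in \R^{1\times\numNodes}$, so $M$ acts as a left row-selector and $M^\top$ as a right column-selector. Since $M$ is a constant matrix, every identity below carries over verbatim from the concrete case $H_{k-1}\in\R^{\numNodes\times\numFeat_{k-1}}$ to the set-valued case by applying $M$ and $M^\top$ as affine maps on the matrix polynomial zonotope $\hiddenSet_{k-1}$ via \eqref{eq:affine-map-mat}.

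First I would establish the two identities by direct computation. For the features, the $a$-th row of $M H_{k-1}$ is $e_{k_a}^\top H_{k-1} = (H_{k-1})_{(k_a,\cdot)}$, hence $H'_{k-1} = M H_{k-1} = (H_{k-1})_{(\mathcal{K},\cdot)}$ stacks exactly the feature vectors of the subgraph nodes in lexicographic order. For the message passing, entry $(a,b)$ of $M P M^\top$ equals $e_{k_a}^\top P\, e_{k_b} = P_{(k_a,k_b)}$, so $P' = M P M^\top = P_{(\mathcal{K},\mathcal{K})}$ is the principal submatrix of $P$ on the rows and columns indexed by $\mathcal{K}$. These steps are routine; the only point worth emphasizing is that $P' = P_{(\mathcal{K},\mathcal{K})}$ is \emph{not} the message-passing matrix one would recompute from scratch on $\Graph'$ (which would use the internal subgraph degrees), but rather retains the original full-graph degree normalization through $\tilde{D}$. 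Retaining this normalization is precisely what makes the restriction faithful to the full computation.

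The hard part will be arguing that $(H'_{k-1}, P')$ ``correspond to the subgraph'' in the sense that propagating them over $\numMPsteps$ message-passing steps reproduces the full-graph output at the central node $v$. I would proceed by a locality induction: the $v$-th output row of a single graph convolutional layer depends only on $P_{(v,\cdot)}$ and on the rows of the input supported on the one-hop neighborhood of $v$, because $P_{(v,u)}\neq 0$ only when $u$ is adjacent to $v$ (including the self-loop of \cref{def:gnn-layer}); iterating over the $\numMPsteps$ layers shows the output at $v$ depends only on features within $\numMPsteps$ hops and on the weights $P_{(u,u')}$ between nodes within $\numMPsteps$ hops. The obstacle is that the single-layer restriction is \emph{not} exact for every node of $\mathcal{K}$: a boundary node of $\mathcal{K}$ has neighbors outside $\mathcal{K}$, so $P_{(\mathcal{K},\mathcal{K}^{c})}\neq\mathbf{0}$ and $M(P H_{k-1}) \neq (M P M^\top)(M H_{k-1})$ in general. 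The resolution is that any such boundary error sits at distance $\numMPsteps+1$ from $v$ and, by the locality induction, cannot reach $v$ within $\numMPsteps$ steps. Finally, the reason for the $(\numMPsteps+1)$-hop neighborhood rather than $\numMPsteps$ is the degree normalization: every weight $P_{(u,u')}$ incident to a hop-$\numMPsteps$ node $u'$ involves its degree $\tilde{d}_{u'} = \sum_w \tilde{A}_{(u',w)}$, whose correct value requires the neighbors of $u'$ at hop $\numMPsteps+1$; including them in $\mathcal{K}$ guarantees that every weight and feature influencing $v$ coincides with the full graph, so the reduced pair yields the identical output at the node of interest and the specification of \cref{sec:problem-statement} can be verified on the subgraph.
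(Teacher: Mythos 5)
Your proposal is correct, and its first two paragraphs are essentially the paper's entire proof made explicit: the paper disposes of the corollary in one sentence ("follows directly from the construction of the projection matrix $M$"), which amounts to exactly your row-selection computation showing $MH_{k-1} = (H_{k-1})_{(\mathcal{K},\cdot)}$ and $MPM^\top = P_{(\mathcal{K},\mathcal{K})}$. Your third paragraph proves substantially more than the paper's proof attempts: the locality induction showing that the restricted pair reproduces the full-graph output at the node of interest, and the explanation of why the $(\numMPsteps+1)$-hop neighborhood (rather than $\numMPsteps$) is needed because the degree normalization in $P$ at a hop-$\numMPsteps$ node depends on its neighbors one hop further out. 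The paper relegates this justification to the informal prose of \cref{sec:subgraph-verification} (with a citation) rather than including it in the proof, so your version is the more self-contained argument; your observation that $P' = P_{(\mathcal{K},\mathcal{K})}$ deliberately retains the \emph{full-graph} degrees rather than recomputing them on $\Graph'$ is exactly the point that makes the whole scheme sound, and is worth stating explicitly as you do.
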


After each graph convolutional layer~(\cref{def:gnn-layer}),
we can further reduce the graph as the number of remaining message-passing steps decreases.
This can be achieved by implicitly adding projection layers computing \cref{cor:subgraph-verification} after each graph convolutional layer.
After the last graph convolutional layer, we can remove all nodes except for the considered node, as no information is exchanged between nodes from that point onward.
This approach can also be naturally extended to multiple nodes of interest by considering all of them during the breadth-first search.
As the selection of the subgraph only requires left and right matrix multiplications,
\cref{cor:subgraph-verification} can also be computed if the input $\hiddenSet_{k-1}\subset\R^{\numNodes\times\numFeat_{k-1}}$
or the message passing $\messagePassSet\subset\R^{\numNodes\times\numNodes}$ are uncertain and represented by a matrix polynomial zonotope
using~\eqref{eq:affine-map-mat}.

\section{Experimental Results} \label{sec:evaluation}

We use the MATLAB toolbox CORA~\citep{althoff2015introduction} to verify graph neural networks,
where we generalize the existing approach of verifying neural networks using polynomial zonotopes \citep{kochdumper2022open,ladner2023automatic} to the graph domain.
All computations were performed on an Intel Core Gen.\ 11 i7-11800H CPU @2.30GHz with 64GB memory.
Further evaluation details can be found in \cref{tab:dataset_overview} and \cref{app:evaluation} along with an ablation study of each component of our approach.

Subsequently, we (i) demonstrate that verification of graph neural networks on two benchmark datasets (Enzymes~\citep{schomburg2004brenda} and Proteins~\citep{bogwardt2005protein}),
(ii) compare our approach to a naive approach enumerating all possible graph inputs,
and (iii) test the scalability of our approach to large graph inputs taken from a third dataset (Cora\footnote{The identical names of the toolbox~CORA and the dataset~Cora are coincidental, with no relation between the two.} \citep{yang2016revisiting, mccallum2000automating}).
We repeat each experiment $50$ times with different graphs sampled from the respective dataset.

\begin{table}
    \centering
    \caption{Properties of the benchmark datasets.}
    \begin{tabular}{@{}llcccccc@{}}
        \toprule
        Name & Classification & \#Graphs & \#Nodes & \#Edges & \#Node features & \#Classes & Perturbation \\
        & & & {\small min/max} & {\small min/max} & {$\numFeat_0$} & {$\numNeurons_\numLayers$} & {$\pertRadius$} \\
        \midrule 
        Enzymes & graph-level & \phantom{0,}600 & $\phantom{0}11/66\phantom{0}$ & $34/186$ & \phantom{0,0}21 & 6 & $0.001$ \\
        Proteins & graph-level & 1,113 & $\phantom{00}4/238$ & $10/869$ & \phantom{0,00}4 & 2 & $0.001$ \\
        Cora & node-level & \phantom{0,00}1 & \phantom{0}2,708 & 10,556 & 1,433 & 7 & $0\phantom{.000}$ \\
        \bottomrule
    \end{tabular}
    \label{tab:dataset_overview}
\end{table}

\subsection{Verfiying Graph Neural Networks}
In our first experiment,
we examine the number of verified graphs with uncertain node features and uncertain graph structure by our approach (\cref{fig:eval_enzymes_proteins_verified_instances}).
The graphs are sorted by their size in ascending order,
and we state the number of uncertain edges $\EdgesUncertain$ relative to the total number of edges of a graph
for better comparability across differently sized graphs.
Surprisingly, we are able to verify more instances in the Proteins dataset than in the Enzymes dataset although the former contains larger graphs (\cref{tab:dataset_overview}).
This indicates that the networks trained on larger graphs are more formally robust against graph structure perturbations despite the normalization of the perturbation to the graph size.
We omit a comparison to interval bound propagation~\citep{jaulin2001interval} here as such results in large outer-approximations due to the lost dependencies as also shown in \cref{ex:gnn-simple-example}.
To verify the specifications, excessive branch-and-bound computations would have to be performed, which quickly exceeds reasonable timeouts.
This highlights the necessity to maintain the dependencies during the set propagation.

\begin{figure}
    \centering
    \tikzsetnextfilename{./figures/evaluation/verified_instances/eval_enzymes_proteins_verified_instances}%
    \definecolor{mycolor1}{rgb}{0.00000,0.44700,0.74100}%
\definecolor{mycolor2}{rgb}{0.85000,0.32500,0.09800}%
\definecolor{mycolor3}{rgb}{0.92900,0.69400,0.12500}%
\definecolor{mycolor4}{rgb}{0.49400,0.18400,0.55600}%
\definecolor{mycolor5}{rgb}{0.46600,0.67400,0.18800}%
\begin{tikzpicture}
    \footnotesize
    \pgfplotsset{
        plotstyle1/.style={color=mycolor1, mark=o, mark options={solid, mycolor1}},
        plotstyle2/.style={color=mycolor2, mark=o, mark options={solid, mycolor2}},
        plotstyle3/.style={color=mycolor3, mark=o, mark options={solid, mycolor3}},
        plotstyle4/.style={color=mycolor4, mark=o, mark options={solid, mycolor4}},
        plotstyle5/.style={color=mycolor5, mark=o, mark options={solid, mycolor5}},
        plotstyle6/.style={dotted, color=mycolor1, mark=x, mark options={solid, mycolor1},mark phase=1,mark repeat=2},
        plotstyle7/.style={dotted, color=mycolor2, mark=x, mark options={solid, mycolor2},mark phase=1,mark repeat=2},
        plotstyle8/.style={dotted, color=mycolor3, mark=x, mark options={solid, mycolor3},mark phase=1,mark repeat=2},
        plotstyle9/.style={dotted, color=mycolor4, mark=x, mark options={solid, mycolor4},mark phase=1,mark repeat=2},
        plotstyle10/.style={dotted, color=mycolor5, mark=x, mark options={solid, mycolor5},mark phase=1,mark repeat=2},
    }
    \def\rows{1}
    \def\cols{2}
    \def\horzsep{1cm}
    \def\basepath{./figures/evaluation/verified_instances/}

    \begin{groupplot}[%
        group style={rows = \rows, columns = \cols, horizontal sep = \horzsep},
        scale only axis,
        width=0.9/\cols*\textwidth -\horzsep,
        height=3cm,
        legend style={legend columns=5,legend to name=legendname, legend cell align=left,/tikz/every even column/.append style={column sep=0.5cm}}
    ]
        \nextgroupplot[xmin=0,xmax=1200,ymin=0,ymax=1,xlabel={Cumulative verification time [s]},ylabel={Verified instances [\%]},title={(a) Enzymes}]
        \input{\basepath eval_enzymes_proteins_verified_instances_11.tikz}
        \coordinate (top) at (rel axis cs:0,1);
        \nextgroupplot[xmin=0,xmax=2500,ymin=0,ymax=1,xlabel={Cumulative verification time [s]},title={(b) Proteins}]
        \input{\basepath eval_enzymes_proteins_verified_instances_12.tikz}
        \input{\basepath eval_enzymes_proteins_verified_instances_legends.tikz}
        \coordinate (bot) at (rel axis cs:1,0);
    \end{groupplot}
    \path (top|-current bounding box.south)--coordinate(legendpos)(bot|-current bounding box.south);
    \node at([yshift=-3ex]legendpos) {\pgfplotslegendfromname{legendname}};

\end{tikzpicture}

    \caption{Verified instances of the Enzymes dataset and the Proteins dataset,
        where the number of uncertain edges $|\EdgesUncertain|$ is relative to the total number of edges $|\Edges|$ in the graph.
        The dotted lines marked with an \texttt{\small x} indicate an upper bound of verifiable instances found via adversarial attacks (\cref{sec:evaluation-setup}).
    }
    \label{fig:eval_enzymes_proteins_verified_instances}
\end{figure}

\subsection{Comparison to Graph Enumeration}
In our second experiment, we evaluate the time complexity on graphs with uncertain node features and uncertain graph structure.
For this experiment, we iteratively increase the number of uncertain edges $\EdgesUncertain$,
and compare it to enumerating all $2^{|\EdgesUncertain|}$ possible graphs based on the uncertain edges and verifying them individually.
As illustrated in \cref{fig:eval_time_comparison},
the verification time using enumeration quickly explodes due to its exponential time complexity,
whereas the verification time of our approach remains low due to its polynomial time complexity (\cref{th:graph-neural-network-verification}).
Specifically, for $|\EdgesUncertain|=9$, the verification time is reduced by $96\%$.

\begin{figure}
    \centering
    \tikzsetnextfilename{./figures/evaluation/time_comparison/eval_time_comparison}%
    \definecolor{mycolor1}{rgb}{0.85000,0.32500,0.09800}%
\definecolor{mycolor2}{rgb}{0.00000,0.44700,0.74100}%
\begin{tikzpicture}
\footnotesize
\pgfplotsset{
plotstyle1/.style={color=mycolor1},
plotstyle2/.style={area legend, draw=none, fill=mycolor1, fill opacity=0.2, forget plot},
plotstyle3/.style={color=mycolor2},
plotstyle4/.style={area legend, draw=none, fill=mycolor2, fill opacity=0.2, forget plot},
/.style={}
}
\def\rows{1}
\def\cols{2}
\def\horzsep{1cm}
\def\basepath{./figures/evaluation/time_comparison/}

\begin{groupplot}[%
group style={rows = \rows, columns = \cols, horizontal sep = \horzsep},
scale only axis,
width=0.9/\cols*\textwidth -\horzsep,
height=3cm,
legend style={legend columns=2,legend to name=legendname, legend cell align=left,/tikz/every even column/.append style={column sep=0.5cm}}
]
\nextgroupplot[xmin=0,xmax=9,ymin=0,ymax=60,xlabel={Number of uncertain edges $\EdgesUncertain$},ylabel={Verificatiom time [s] / $\numNodes$},title={(a) Enzymes}]
\input{\basepath eval_time_comparison_11.tikz}
\coordinate (top) at (rel axis cs:0,1);
\nextgroupplot[xmin=0,xmax=9,ymin=0,ymax=60,xlabel={Number of uncertain edges $\EdgesUncertain$},title={(b) Proteins}]
\input{\basepath eval_time_comparison_legends.tikz}
\input{\basepath eval_time_comparison_12.tikz}
\coordinate (bot) at (rel axis cs:1,0);
\end{groupplot}
\path (top|-current bounding box.south)--coordinate(legendpos)(bot|-current bounding box.south);
\node at([yshift=-3ex]legendpos) {\pgfplotslegendfromname{legendname}};

\end{tikzpicture}

    \caption{Time comparison of our approach with computing all possible graphs individually,
        where we normalized the verification time by the number of nodes $\numNodes$ of the verified graphs.
    }
    \label{fig:eval_time_comparison}
\end{figure}

\subsection{Scalability Through Subgraph Verification}
In our third experiment, we demonstrate the scalability of our approach by applying it on the Cora dataset.
For this dataset, we do not use a perturbation radius ($\pertRadius=0$) as the input data is binary
and thus perturbations do not have an intuitive justification.
As this dataset has a node-level output,
we can dynamically remove nodes that do not influence a considered node throughout the verification process (\cref{sec:subgraph-verification}).
However, we want to stress that, on average, about half of the nodes have to be considered initially, as the graph is highly connected.
The verification results for two graph neural networks with different numbers of message-passing steps ($\numMPsteps=2$
and $\numMPsteps=3$) are shown in \cref{fig:eval_cora_verified_instances}.
We obtain high verification rates despite the large size of the graph of the Cora dataset~(\cref{tab:dataset_overview}).
Please note that for a fixed number of perturbed edges, the verification time varies significantly despite always verifying a node on the same graph.
This is primarily due to the dynamic subgraph extraction being able to remove many nodes and, thus, obtaining a $7x$ speed up in computation time.
Additionally, dynamically reducing the size after each message passing step allows us
to verify $2$-$3$ times larger graphs measured in the number of nodes that need to be considered at the input of the network.

\begin{figure}
    \centering
    \tikzsetnextfilename{./figures/evaluation/verified_instances/eval_cora_verified_instances}%
    \definecolor{mycolor1}{rgb}{0.00000,0.44700,0.74100}%
\definecolor{mycolor2}{rgb}{0.85000,0.32500,0.09800}%
\definecolor{mycolor3}{rgb}{0.92900,0.69400,0.12500}%
\definecolor{mycolor4}{rgb}{0.49400,0.18400,0.55600}%
\definecolor{mycolor5}{rgb}{0.46600,0.67400,0.18800}%
\begin{tikzpicture}
\footnotesize
\pgfplotsset{
plotstyle1/.style={color=mycolor1, mark=o, mark options={solid, mycolor1}},
plotstyle2/.style={color=mycolor2, mark=o, mark options={solid, mycolor2}},
plotstyle3/.style={color=mycolor3, mark=o, mark options={solid, mycolor3}},
plotstyle4/.style={color=mycolor4, mark=o, mark options={solid, mycolor4}},
plotstyle5/.style={color=mycolor5, mark=o, mark options={solid, mycolor5}},
plotstyle6/.style={dotted, color=mycolor1, mark=x, mark options={solid, mycolor1},mark phase=1,mark repeat=2},
plotstyle7/.style={dotted, color=mycolor2, mark=x, mark options={solid, mycolor2},mark phase=1,mark repeat=2},
plotstyle8/.style={dotted, color=mycolor3, mark=x, mark options={solid, mycolor3},mark phase=1,mark repeat=2},
plotstyle9/.style={dotted, color=mycolor4, mark=x, mark options={solid, mycolor4},mark phase=1,mark repeat=2},
plotstyle10/.style={dotted, color=mycolor5, mark=x, mark options={solid, mycolor5},mark phase=1,mark repeat=2},
}
\def\rows{1}
\def\cols{2}
\def\horzsep{1cm}
\def\basepath{./figures/evaluation/verified_instances/}

\begin{groupplot}[%
group style={rows = \rows, columns = \cols, horizontal sep = \horzsep},
scale only axis,
width=0.9/\cols*\textwidth -\horzsep,
height=3cm,
legend style={legend columns=5,legend to name=legendname, legend cell align=left,/tikz/every even column/.append style={column sep=0.5cm}}
]
\nextgroupplot[xmin=0,xmax=600,ymin=0,ymax=1,xlabel={Cumulative verification time [s]},ylabel={Verified instances [\%]},title={(a) Cora ($\numMPsteps=2$)}]
\input{\basepath eval_cora_verified_instances_11.tikz}
\coordinate (top) at (rel axis cs:0,1);
\nextgroupplot[xmin=0,xmax=1600,ymin=0,ymax=1,xlabel={Cumulative verification time [s]},title={(b) Cora ($\numMPsteps=3$)}]
\input{\basepath eval_cora_verified_instances_legends.tikz}
\input{\basepath eval_cora_verified_instances_12.tikz}
\coordinate (bot) at (rel axis cs:1,0);
\end{groupplot}
\path (top|-current bounding box.south)--coordinate(legendpos)(bot|-current bounding box.south);
\node at([yshift=-3ex]legendpos) {\pgfplotslegendfromname{legendname}};

\end{tikzpicture}

    \caption{Verified instances of the Cora dataset with different numbers of message-passing steps,
        where the number of uncertain edges $|\EdgesUncertain|$ is relative to the total number of edges $|\Edges|$ in the graph.
    }
    \label{fig:eval_cora_verified_instances}
\end{figure}


\section{Conclusion} \label{sec:conclusion}

We present the first formal verification approach for graph convolutional networks,
where both the node features and the graph structure can be uncertain.
The considered network architecture is generic, may have arbitrary element-wise activation functions,
and, for the first time, can be verified over multiple message-passing steps.
This is realized by generalizing existing verification approaches using polynomial zonotopes to graph neural networks.
The use of (matrix) polynomial zonotopes allows us to preserve the non-convex dependencies of the involved variables
and efficiently compute all underlying operations,
resulting in an overall polynomial time complexity in the number of uncertain edges and uncertain input features.
We demonstrate our approach using illustrative examples and an experimental evaluation on three benchmark datasets obtaining three key observations:
Firstly, it is important to maintain the dependencies throughout the set propagation for tight enclosures during the verification of graph neural networks.
Secondly, the polynomial time complexity of our approach enables the verification of graphs with uncertain node features and uncertain graph structure in reasonable computational times.
Lastly, the computation time can be further accelerated on graph neural networks with node-level outputs by dynamically extracting the relevant subgraphs after each message passing step.
We hope that our work will inspire future research on more complex network architectures.

\subsubsection*{Acknowledgments}
This work was partially supported by the project FAI (No. 286525601), the project SFB 1608 (No. 501798263), and the project SAFARI (No. 458030766), 
all funded by the German Research Foundation (Deutsche Forschungsgemeinschaft, DFG).
We also want to thank our colleagues
Florian Finkeldei, Lukas Koller, and Mark Wetzlinger 
for their revisions of the manuscript.

\bibliography{references}
\bibliographystyle{tmlr}

\appendix
\newpage

{\LARGE\bf\sffamily \textbf{Appendix}} 

\section{On Polynomial Zonotopes}\label{app:polynomial-zonotopes}

\begin{figure}[h]
    \centering
    \tikzsetnextfilename{./figures/quadMap/example_quadMap}%
    \definecolor{mycolor1}{rgb}{0.00000,0.44700,0.74100}%
\definecolor{mycolor2}{rgb}{0.46600,0.67400,0.18800}%
\begin{tikzpicture}
\footnotesize
\pgfplotsset{
plotstyle1/.style={color=mycolor1},
plotstyle2/.style={color=mycolor2},
plotstyle3/.style={color=black, only marks, mark=*, mark size=0.25pt, mark options={solid, black}},
/.style={}
}
\def\rows{1}
\def\cols{3}
\def\horzsep{1.5cm}
\def\basepath{./figures/quadMap/}

\begin{groupplot}[%
group style={rows = \rows, columns = \cols, horizontal sep = \horzsep},
scale only axis,
width=1/\cols*\linewidth -\horzsep,
legend style={legend columns=3,legend to name=legendname, legend cell align=left,/tikz/every even column/.append style={column sep=0.5cm}}
]
\nextgroupplot[xmin=-1.2,xmax=1.2,ymin=-1.5,ymax=1.5,xlabel={$x_{(1)}$},ylabel={$x_{(2)}$},title={(a)}]
\input{\basepath example_quadMap_11.tikz}
\coordinate (top) at (rel axis cs:0,1);
\nextgroupplot[xmin=-1.2,xmax=1.2,ymin=-0.3,ymax=1.5,ytick={0,0.5,1},xlabel={$x_{(1)}$},ylabel={$x_{(2)}$},title={(b)}]
\input{\basepath example_quadMap_legends.tikz}
\input{\basepath example_quadMap_12.tikz}
\coordinate (bot) at (rel axis cs:1,0);
\end{groupplot}
\path (top|-current bounding box.south)--coordinate(legendpos)(bot|-current bounding box.south);
\node at([yshift=-3ex]legendpos) {\pgfplotslegendfromname{legendname}};

\end{tikzpicture}

    \caption{Visualization of the quadratic map using the polynomial zonotope $\PZ$ from \cref{app:polynomial-zonotopes}.}
    \label{fig:example_quadMap}
\end{figure}

In this section, we provide further details on polynomial zonotopes (\cref{def:pZ}).
Uncertain inputs are often constructed by some $\ell_\infty$-ball with radius $\pertRadius$ around a vector $x\in\R^n$ \citep{brix2023fourth}.
For $n=3$, a polynomial zonotope representing this ball is given by
\begin{align}
    \begin{split}
        \inputSet &= \defPZ{x}{\cmatrix{1 & 0 & 0 \\ 0 & 1 & 0 \\ 0 & 0 & 1}}{\cmatrix{\ \ \\\ \\\ }}{\cmatrix{1 & 0 & 0 \\ 0 & 1 & 0 \\ 0 & 0 & 1}}
        = \left\{ x + \alpha_1^1\cmatrix{1 \\ 0 \\ 0} + \alpha_2^1\cmatrix{0 \\ 1 \\ 0} + \alpha_3^1\cmatrix{0 \\ 0 \\ 1}\ \middle|\ \alpha_k\in[-1,1] \right\}.
    \end{split}
\end{align}
Please note that each generator defines a line segment and the entire set is given by the Minkowski sum of each line segment.
From this also follows the computation of the Minkowski sum~\eqref{eq:mink-sum} as this operation simply adds more of these line segments to the set.

However, such a set can also be represented using many other set representations.
The benefit of polynomial zonotopes over other set representations becomes apparent if the represented set is non-convex,
which occurs when nonlinear functions -- as in neural networks -- are evaluated over the input set.
Conceptually, applying a nonlinear function bends the line segments by adding dependencies between generators:

Let us consider the set $\{ \cmatrix{\alpha_1\,\ \alpha_1^3{+}0.1\alpha_2\,\ \alpha_1^2}^\top\,|\ \alpha_1,\alpha_2\in[-1,1] \}\subset\R^3$ visualized in \cref{fig:example_quadMap}a.
This set can be represented as a polynomial zonotope as follows:
\begin{equation}
    \PZ=\defPZ[.]{\cmatrix{0\\0\\0}}{\cmatrix{1&0&0&0 \\ 0&1&0.1&0 \\ 0&0&0&1}}{\cmatrix{\ \ \\\ \\\ }}{\cmatrix{1&3&0&2 \\ 0&0&1&0}}
\end{equation}

As we cannot evaluate a general nonlinear function over the input set, the nonlinear function is often approximated using polynomials and the approximation error is bounded.
The polynomial evaluation is computed using repeated executions of the quadratic map,
which can be evaluated exactly for polynomial zonotopes and introduces dependencies between generators.
\begin{proposition}[Quadratic Map~{\cite[Prop.~3.1.30]{kochdumper2022extensions}}]%
    \label[proposition]{prop:quad-map}%
    Given two polynomial zonotopes $\PZ_1 = \defPZ{c_1}{G_1}{[\ ]}{E_1} \subset \R^{n_1}$, $\PZ_2 = \defPZ{c_2}{G_2}{[\ ]}{E_2} \subset \R^{n_2}$
    with $h_1$ and $h_2$ generators, respectively, a common identifier vector,
    and $\mathcal{Q} = (Q_1,\ldots,Q_{\bar{n}})$, $Q_i\in\R^{n_1\times n_2}$,
    then the quadratic map is computed as follows:
    \begin{align*}
        \begin{split}
            \overline{\PZ} &= \opQuadMap{\PZ_1}{\PZ_2}{\mathcal{Q}}
            = \left\{ \cmatrix{x_1^\top Q_1 x_2 \\ \vdots \\ x_1^\top Q_{\bar{n}} x_2}\ \middle|\ x_1\in\PZ_1,\,x_2\in\PZ_2 \right\} \\
            &= \defPZ{\overline{c}}{\cmatrix{\widehat{G}_{1}\ \widehat{G}_{2}\ \overline{G}_1\ \ldots\ \overline{G}_h}}{\cmatrix{\ \ }}{\cmatrix{E_1\ E_2\ \overline{E}_1\ \ldots\ \overline{E}_h}} \subset\R^{\bar{n}},
        \end{split}
    \end{align*}
    where
    \begin{align*}
        \begin{split}
            \overline{c} =\cmatrix{c^\top_1 Q_1 c_2\\\vdots\\c^\top_1 Q_{\bar{n}} c_2}, \ %
            \widehat{G}_{1} = \cmatrix{c^\top_2 Q_1^\top G_1\\\vdots\\c^\top_2 Q_{\bar{n}}^\top G_1}, \ %
            \widehat{G}_{2} = \cmatrix{c^\top_1 Q_1 G_2\\\vdots\\c^\top_1 Q_{\bar{n}} G_2},\ %
            \overline{G}_j = \cmatrix{G_{1(\cdot,j)}^\top Q_1 G_2\\\vdots\\G_{1(\cdot,j)}^\top Q_{\bar{n}} G_2}, \\
        \end{split}
    \end{align*}
    and $\overline{E}_j = E_2 + E_{1(\cdot,j)} \textbf{1},\, j \in [h_1]$.
    The output $\overline{\PZ}$ has $\Onot(h_1h_2)$ generators.
\end{proposition}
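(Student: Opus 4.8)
The plan is to prove the claimed set equality by directly expanding the bilinear form $x_1^\top Q_i x_2$ in terms of the shared dependent factors and matching the resulting terms against the four generator blocks. Since both $\PZ_1$ and $\PZ_2$ carry no independent generators and share a common identifier, every $x_1\in\PZ_1$ and $x_2\in\PZ_2$ can be written over the \emph{same} factors $\alpha_1,\ldots,\alpha_p\in[-1,1]$ as $x_1 = c_1 + \sum_{j=1}^{h_1} m_j\, G_{1(\cdot,j)}$ and $x_2 = c_2 + \sum_{l=1}^{h_2} n_l\, G_{2(\cdot,l)}$, where $m_j = \prod_{k=1}^p \alpha_k^{E_{1(k,j)}}$ and $n_l = \prod_{k=1}^p \alpha_k^{E_{2(k,l)}}$ are the monomials associated with the respective generators.

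First I would substitute these expansions into the $i$-th component $x_1^\top Q_i x_2$ and multiply out. Bilinearity splits each component into four groups of terms: the constant term $c_1^\top Q_i c_2$; the terms $\sum_j m_j\, G_{1(\cdot,j)}^\top Q_i c_2 = \sum_j m_j\, c_2^\top Q_i^\top G_{1(\cdot,j)}$ that are linear in the $\PZ_1$-monomials; the terms $\sum_l n_l\, c_1^\top Q_i G_{2(\cdot,l)}$ linear in the $\PZ_2$-monomials; and the bilinear terms $\sum_{j}\sum_{l} m_j n_l\, G_{1(\cdot,j)}^\top Q_i G_{2(\cdot,l)}$. Stacking these expressions over $i\in[\bar{n}]$ yields exactly $\overline{c}$, $\widehat{G}_{1}$, $\widehat{G}_{2}$, and the blocks $\overline{G}_j$ as stated, while the attached monomials supply the exponent columns $E_1$, $E_2$, and $\overline{E}_j$.

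The key bookkeeping step --- and the one requiring the most care --- is tracking the exponents. Because the factors are shared, a product of monomials satisfies $m_j n_l = \prod_{k=1}^p \alpha_k^{E_{1(k,j)}+E_{2(k,l)}}$, so the bilinear term indexed by $(j,l)$ carries the exponent vector $E_{1(\cdot,j)}+E_{2(\cdot,l)}$. Holding $j$ fixed and letting $l$ range over $[h_2]$, these exponent vectors form the columns of $E_{1(\cdot,j)}\mathbf{1} + E_2 = \overline{E}_j$, and the corresponding generators form the $h_2$ columns of $\overline{G}_j$; this is precisely where the broadcast of $E_{1(\cdot,j)}$ across all columns must be handled correctly. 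I would then observe that duplicate exponent columns (identical monomials arising from different index pairs) are harmless: they may be merged into a reduced representation but need not be, since the claim is a set equality rather than a statement about a canonical form.

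Finally, the set equality follows because the substitution is an identity: for every choice of $\alpha_1,\ldots,\alpha_p\in[-1,1]$, the point produced by the quadratic map coincides with the point of the constructed polynomial zonotope obtained from the same factors, which gives both inclusions simultaneously. The generator count is then immediate --- $\widehat{G}_{1}$ contributes $h_1$ columns, $\widehat{G}_{2}$ contributes $h_2$ columns, and the $h_1$ blocks $\overline{G}_1,\ldots,\overline{G}_{h_1}$ each contribute $h_2$ columns, for a total of $h_1 + h_2 + h_1 h_2 \in \Onot(h_1 h_2)$. The main obstacle is purely the exponent-matrix bookkeeping under the shared-identifier convention; the algebra of the expansion itself is routine, and the absence of independent generators is what guarantees that no over-approximation is introduced.
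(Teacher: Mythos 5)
Your expansion argument is correct: writing $x_1$ and $x_2$ over the shared dependent factors, substituting into $x_1^\top Q_i x_2$, and collecting the constant, linear, and bilinear terms yields exactly $\overline{c}$, $\widehat{G}_1$, $\widehat{G}_2$, and $\overline{G}_j$ with exponent columns $E_1$, $E_2$, and $\overline{E}_j = E_2 + E_{1(\cdot,j)}\mathbf{1}$, and the $h_1+h_2+h_1h_2\in\Onot(h_1h_2)$ generator count follows. The paper does not prove \cref{prop:quad-map} itself (it is quoted from \citet{kochdumper2022extensions}), but the proof it does give for the matrix analogue, \cref{prop:quadMat-setMat}, is precisely this expand-and-collect argument over a common identifier, so your approach coincides with the paper's.
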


Please note that the quadratic map operation is defined for polynomial zonotopes with a common identifier vector.
We can adjust two polynomial zonotopes with different identifiers by extending the exponent matrix accordingly~\citep[Prop.~3.1.5]{kochdumper2022extensions}.
In this work, we only use matrices in $\mathcal{Q}$ with entries consisting of zeros and ones,
which effectively selects which dimensions of the polynomial zonotopes are multiplied as part of a quadratic map.

For example, the set $\{ \cmatrix{\alpha_1^3\,\ (\alpha_1^3{+}0.1\alpha_2)^2}^\top \,|\ \alpha_1,\alpha_2\in[-1,1] \} \subset \R^2$ visualized in \cref{fig:example_quadMap}b
can be computed from $\PZ$ using the quadratic map with $\mathcal{Q}=\left\{Q_1, Q_2\right\}$, where
\begin{equation}
    Q_1=\cmatrix{0&0&1 \\ 0&0&0 \\ 0&0&0},\ Q_2=\cmatrix{0&0&0 \\ 0&1&0 \\ 0&0&0}.
\end{equation}
Thus,
\begin{align}
    \begin{split}
        \overline{\PZ} &= \opQuadMap{\PZ}{\PZ}{\mathcal{Q}} = \left\{ \cmatrix{x^\top Q_1 x \\ x^\top Q_2 x}\ \middle|\ x\in\PZ \right\} \\
        &= \defPZ[,]{\cmatrix{0\\0}}{\cmatrix{1&0&0&0 \\ 0&1&0.2&0.01}}{\cmatrix{\ \ \\\ }}{\cmatrix{3&6&3&0 \\ 0&0&1&2}}
    \end{split}
\end{align}
where we compacted $\overline{\PZ}$ by removing zero-length generators and adding generators whose dependent factors have equal exponents.

\section{Proofs}\label{app:proofs}
We include all proofs from the main body in this section in the order of appearance.


\renewcommand{\thetheorem}{\ref{prop:quadMat-setMat}}
\propquadMatsetMat*
\begin{proof}
    The result $\mathcal{M}_3$ is obtained as follows:
    \newcommand{\myspecialvphantom}[0]{\vphantom{\left(\prod_{k=1}^p \alpha_k^{E_{1(k,i)}}\right) G_{1(\cdot,\cdot, i)} C_2}}
    \begin{align*}
        \mathcal{M}_3 &= \opQuadMapMat{\mathcal{M}_1}{\mathcal{M}_2} = \left\{ (M_1 M_2)\ \middle|\ M_1\in\mathcal{M}_1,\, M_2\in\mathcal{M}_2 \right\} \\
        & = \left\{\left( C_1 + \sum_{i=1}^{h_1} \left(\prod_{k=1}^p \alpha_k^{E_{1(k,i)}}\right) G_{1(\cdot,\cdot, i)} \right) \left( C_2 + \sum_{j=1}^{h_2} \left(\prod_{k=1}^p \alpha_k^{E_{2(k,j)}}\right) G_{2(\cdot,\cdot, j)} \right) \ \middle|\ \alpha_k\in[-1,1] \right\} \\
        & = \left\{ \myspecialvphantom \smash{\underbrace{C_1 C_2 \myspecialvphantom}_{=\, C}}
        + \smash{\underbrace{\sum_{i=1}^{h_1} \left(\prod_{k=1}^p \alpha_k^{E_{1(k,i)}}\right) G_{1(\cdot,\cdot, i)} C_2}_{=\, \widehat{G}_1}}
        + \smash{\underbrace{\sum_{i=1}^{h_2} \left(\prod_{k=1}^p \alpha_k^{E_{2(k,i)}}\right) C_1 G_{2(\cdot,\cdot, j)}}_{=\, \widehat{G}_2}} \right. \\
         & \vphantom{\cmatrix{\\\ \\\ \\\ \\\  \\\ }} \qquad\qquad\qquad + \left. \sum_{i=1}^{h_1} \smash{\underbrace{\sum_{j=1}^{h_2} \left(\prod_{k=1}^p \alpha_k^{E_{1(k,i)} + E_{2(k,j)}}\right) G_{1(\cdot,\cdot, i)} G_{2(\cdot,\cdot, j)}}_{=\, \overline{G}_i,\, \overline{E}_i}}
        \ \middle|\ \alpha_k\in[-1,1] \right\}.
    \end{align*}
    The number of generators follows directly from the size of $\widehat{G}_1,\,\widehat{G}_2$ and $\overline{G}_i$.

\end{proof}

An efficient implementation of \cref{prop:quadMat-setMat} is given in \cref{app:efficient-impl-mat-set}.
Please note that \cref{prop:quadMat-setMat} is a special case of the quadratic map (\cref{prop:quad-map}),
where we first vectorize the given matrix polynomial zonotopes to $\vec{\mathcal{M}}_1\subset\R^{n\cdot k}$ and $\vec{\mathcal{M}}_2\subset\R^{k\cdot m}$ (\cref{sec:pre-content}),
and then compute
\begin{align*}
    \vec{\mathcal{M}}_3 = \opQuadMap{\vec{\mathcal{M}}_1}{\vec{\mathcal{M}}_2}{\mathcal{Q}} \subset \R^{n\cdot m},
\end{align*}
with $\mathcal{Q}=(Q_{1,1},\,Q_{2,1},\,\ldots,\,Q_{n,1},\,Q_{1,2},\,\ldots,\,Q_{n,m})$.
Let $v_i = \cmatrix{i & \ldots & i+n(k-1) }$ and $w_j = \cmatrix{k(j-1)+1 & \ldots & k(j-1)+k }$ be the respective indices involved to compute the $(i,j)$-th entry,
then
\begin{equation*}
    Q_{i,j} = \opSparse{v_i}{w_j}{nk}{km} \in\R^{(nk)\times(km)}
\end{equation*}
with ones in positions $(v_{i(l)},w_{j(l)}),\,\forall l\in[k],$ and zeros otherwise.
Finally, the result $\vec{\mathcal{M}}_3$ is re-written as a matrix polynomial zonotope $\mathcal{M}_3\subset\R^{n\times m}$.


\renewcommand{\thetheorem}{\ref{prop:vec-gnn-layer}}
\propvecgnnlayer*
\begin{proof}
    As the graph convolutional layer is composed of a left and a right matrix multiplication,
    the computation is exact using~\eqref{eq:affine-map-mat}.
\end{proof}

\renewcommand{\thetheorem}{\ref{prop:vec-sum-pooling_layer}}
\propvecsumpoolinglayer*
\begin{proof}
    As the pooling layer is computed by a left matrix multiplication,
    the computation is exact using~\eqref{eq:affine-map-mat}.
\end{proof}

\renewcommand{\thetheorem}{\ref{lem:enclose-inverse-square-root}}
\lemencloseinversesquareroot*
\begin{proof}
    The maximum approximation error $d$ it at the extreme point:
    \begin{align*}
        \hspace{5cm} && \frac{d}{dx}\ \left(  f(x) - p(x) \right) &= 0 \hspace{5cm}  \\
        \iff&& -\frac{1}{2} x^{-\frac{3}{2}} - a &= 0 \\
        \iff&& x^{-\frac{3}{2}} &= - 2a \\
        \implies && x &= \sqrt[3]{\left( \sfrac{1}{2a} \right)^2}, 
    \end{align*}
    or on a boundary point $l,\,u$ if the extreme point lies outside $[l,u]$ due to monotonicity.
\end{proof}

\renewcommand{\thetheorem}{\ref{prop:uncertain-message-passing}}
\propuncertainmessagepassing*
\begin{proof}
    The enclosure is computed using a set-based evaluation of the message passing in \cref{def:gnn-layer} using \eqref{eq:adjmatset-selfloop}--\eqref{eq:degMatSet-invSqrt-impl} and \cref{prop:quadMat-setMat}.
    These steps are computed using affine maps~\eqref{eq:affine-map-mat} and matrix multiplications of polynomial zonotopes~(\cref{prop:quadMat-setMat}),
    which are exact,
    and the enclosure of $\tilde{\degMatSet}_\text{diag}$ using \cref{prop:image-enclosure} with the approximation error in \cref{lem:enclose-inverse-square-root},
    which is outer-approximative.
    Thus, the enclosure of the message passing is sound.

    Number of generators:
    Affine maps do not increase the number of generators~\eqref{eq:affine-map-mat}.
    The enclosure of $\tilde{\degMatSet}_\text{diag}$ in~\eqref{eq:enclosure-degMat-diag}
    adds one generator for each node with an uncertain degree (\cref{prop:image-enclosure}), which
    are at most $2h$ as each uncertain edge in $\adjMatSet$ has two adjacent nodes.
    Finally, two applications of the matrix multiplication on matrix polynomial zonotopes (\cref{prop:quadMat-setMat})
    obtains the $\Onot(h^3)$ generators of $\messagePassSet$.
\end{proof}

\renewcommand{\thetheorem}{\ref{prop:enclosure-gnn-layer}}
\propenclosuregnnlayer*
\begin{proof}
    The enclosure follows directly from the enclosure of the message passing (\cref{prop:uncertain-message-passing}),
    the matrix multiplication on polynomial zonotopes (\cref{prop:quadMat-setMat}), and the affine map~\eqref{eq:affine-map-mat}.
    Given the number of generators of $\hiddenSet_{k-1}$ and $\messagePassSet$,
    the number of generators of $\hiddenSet_{k}$ follows from \cref{prop:quadMat-setMat}.
\end{proof}

\renewcommand{\thetheorem}{\ref{th:graph-neural-network-verification}}
\thgraphneuralnetworkverification*
\begin{proof}
    The problem statement is satisfied as each step to compute $\outputSet$ is either exact (\cref{prop:vec-sum-pooling_layer},~\eqref{eq:linear-map}) or outer-approximative (\cref{prop:uncertain-message-passing}, \cref{prop:enclosure-gnn-layer}, and \cref{prop:image-enclosure}),
    and the specification can be checked as in previous approaches using polynomial zonotopes \citep{kochdumper2022open,ladner2023automatic}.
    The message passing $\messagePassSet$ has $\Onot(h_e^3)$ generators (\cref{prop:uncertain-message-passing}).
    The enclosure of a nonlinear layer adds at most one generator for each output neuron (\cref{prop:image-enclosure}).
    The global pooling layer~(\cref{prop:vec-sum-pooling_layer}) and linear layers~\eqref{eq:linear-map} do not change the number of generators.
    Thus, the number of generators of $\outputSet$ in \cref{alg:graph-neural-network-verification} is:
    \begin{align*}
        h_y &\in \Onot\biggl(\overbrace{h_e^3\cdot\left( h_e^3\cdot\left(\cdots\ \smash{\underbrace{h_e^3\cdot h_x}_{\text{\normalfont (\cref{prop:enclosure-gnn-layer})}}} + \smash{\underbrace{\numNodes\numFeat_{2}}_{\text{\normalfont (\cref{prop:image-enclosure})}}}\ \cdots \right) + \numNodes\numFeat_{2\numMPsteps-2}  \right) + \numNodes\numFeat_{2\numMPsteps}\vphantom{\sum_{k}^{\numLayers} }}^{\numMPsteps \text{\normalfont\ message passing steps (\crefrange{alg-line:for-num-mp-steps}{alg-line:for-num-mp-steps-end})}}
        \ + \overbrace{\frac{1}{2}\sum_{k=2\numMPsteps+2}^{\numLayers} \smash{\underbrace{\numNeurons_k}_{\text{\normalfont (\cref{prop:image-enclosure})}}}}^{\text{\normalfont (\crefrange{alg-line:for-std-layer}{alg-line:for-std-layer-end})}} \biggr) \\
        &= \Onot\biggl( (h_e^3)^{\numMPsteps} h_x + \underbrace{(h_e^3)^{\numMPsteps-1} \numNodes\numFeat_{2} + \cdots + (h_e^3)^1 \numNodes\numFeat_{2\numMPsteps-2} + \numNodes\numFeat_{2\numMPsteps}}_{\text{\normalfont Polynomial of order } \numMPsteps-1}
        \ +\ \frac{1}{2}\sum_{k=2\numMPsteps+2}^{\numLayers} \numNeurons_k\biggr) \\
        &\subseteq \Onot\left( (h_e^3)^{\numMPsteps} h_x + (h_e^3)^{\numMPsteps-1}  \max_{k'\in[2\numMPsteps]} \numNodes\numFeat_{k'}
        \ + \frac{1}{2}\sum_{k=2\numMPsteps+2}^{\numLayers} \numNeurons_k\right) \\
        &\subseteq \Onot\left( h_e^{3\numMPsteps} \left(h_x + \max_{k\in[2\numMPsteps]} \numNodes\numFeat_{2k'}\right)
        \ + \frac{1}{2}\sum_{k=2\numMPsteps+2}^{\numLayers} \numNeurons_k\right)
        = \widetilde{\mathcal{H}}_y.
    \end{align*}
    Next, we simplify the term by bounding the number of output neurons with their maximum:
    \begin{align*}
        \widetilde{\mathcal{H}}_y
        &\subseteq \Onot\biggl( h_e^{3\numMPsteps} \biggl(h_x + \numNodes\numFeat_{\max}\biggr)
        + \frac{1}{2}\sum_{k=2\numMPsteps+2}^{\numLayers} \numNeurons_{\max}\biggr) \\
        &\subseteq \Onot\left( h_e^{3\numMPsteps} \left(h_x + \numNodes\numFeat_{\max}\right)
        + (\numLayers-2\numMPsteps) \numNeurons_{\max}\right),
    \end{align*}
    which shows that $h_y \in \widetilde{\mathcal{H}}_y \subseteq \Onot\left( h_e^{3\numMPsteps} \left(h_x + \numNodes\numFeat_{\max}\right)
    + (\numLayers-2\numMPsteps) \numNeurons_{\max}\right)$.
\end{proof}

\renewcommand{\thetheorem}{\ref{cor:subgraph-verification}}
\corsubgraphverification*
\begin{proof}
    The statement follows directly from the construction of the projection matrix $M$,
    where nodes that are not in $\Graph'$ are removed.
\end{proof}

\section{Efficient Matrix Multiplication Implementation of Matrix Polynomial Zonotopes}\label{app:efficient-impl-mat-set}

We want to stress that \cref{prop:quadMat-setMat} can be efficiently computed using matrix broadcasting,
as effectively the center matrix and each generator matrix from one set is multiplied with the center matrix and each generator matrix of the other set.
Let us restate \cref{prop:quadMat-setMat} here for convenience:

\renewcommand{\thetheorem}{\ref{prop:quadMat-setMat}}
\propquadMatsetMat*

The following code shows the efficient implementation of this multiplication using broadcasting in MATLAB syntax.

%
%
%
%
  
\definecolor{mblue}{rgb}{0,0,1} 
\definecolor{mgreen}{rgb}{0.13333,0.5451,0.13333} 
\definecolor{mred}{rgb}{0.62745,0.12549,0.94118} 
\definecolor{mgrey}{rgb}{0.5,0.5,0.5} 
\definecolor{mdarkgrey}{rgb}{0.25,0.25,0.25} 
  
\DefineShortVerb[fontfamily=courier,fontseries=m]{\$} 
\DefineShortVerb[fontfamily=courier,fontseries=b]{\#} 
  
\noindent                           
 \hspace*{-1.6em}{\scriptsize 1}$  $\color{mgreen}$
 \hspace*{-1.6em}{\scriptsize 2}$  $\color{mgreen}$
 \hspace*{-1.6em}{\scriptsize 3}$  $\color{mgreen}$
 \hspace*{-1.6em}{\scriptsize 4}$  $\\
 \hspace*{-1.6em}{\scriptsize 5}$  $\color{mgreen}$
 \hspace*{-1.6em}{\scriptsize 6}$  $\color{mgreen}$
 \hspace*{-1.6em}{\scriptsize 7}$  C1 = reshape(C1,n,k,1);$\\
 \hspace*{-1.6em}{\scriptsize 8}$  G1 = reshape(G1,n,k,1,h1,1);$\\
 \hspace*{-1.6em}{\scriptsize 9}$  E1 = reshape(E1,p,h1,1);$\\
 \hspace*{-2em}{\scriptsize 10}$  $\\
 \hspace*{-2em}{\scriptsize 11}$  C2 = reshape(C2,1,k,m);$\\
 \hspace*{-2em}{\scriptsize 12}$  G2 = reshape(G2,1,k,m,1,h2);$\\
 \hspace*{-2em}{\scriptsize 13}$  E2 = reshape(E2,p,1,h2);$\\
 \hspace*{-2em}{\scriptsize 14}$  $\\
 \hspace*{-2em}{\scriptsize 15}$  $\color{mgreen}$
 \hspace*{-2em}{\scriptsize 16}$  C_bar = sum(C1 .* C2,2);  $\color{mgreen}$
 \hspace*{-2em}{\scriptsize 17}$  G1_hat = sum(G1 .* C2,2); $\color{mgreen}$
 \hspace*{-2em}{\scriptsize 18}$  G2_hat = sum(C1 .* G2,2); $\color{mgreen}$
 \hspace*{-2em}{\scriptsize 19}$  G_bar = sum(G1 .* G2,2);  $\color{mgreen}$
 \hspace*{-2em}{\scriptsize 20}$  E_bar = E1 + E2;          $\color{mgreen}$
 \hspace*{-2em}{\scriptsize 21}$  $\\
 \hspace*{-2em}{\scriptsize 22}$  $\color{mgreen}$
 \hspace*{-2em}{\scriptsize 23}$  C_bar = reshape(C_bar,n,m);$\\
 \hspace*{-2em}{\scriptsize 24}$  G1_hat = reshape(G1_hat,n,m,h1);$\\
 \hspace*{-2em}{\scriptsize 25}$  G2_hat = reshape(G2_hat,n,m,h2);$\\
 \hspace*{-2em}{\scriptsize 26}$  G_bar = reshape(G_bar,n,m,h1*h2);$\\
 \hspace*{-2em}{\scriptsize 27}$  E_bar = reshape(E_bar,p,h1*h2);$\\ 
  
\UndefineShortVerb{\$} 
\UndefineShortVerb{\#}

The broadcasting in lines 16-20 is also parallelizable and can be efficiently computed on a GPU,
which further enhances the computation of the multiplication of matrix polynomial zonotopes.

\section{Evaluation Details and Further Experiments} \label{app:evaluation}

\subsection{Evaluation Setup}\label{sec:evaluation-setup}

Please recall the problem statement (\cref{sec:problem-statement}):
Given a graph neural network~$\NN$, an uncertain graph~$\Graph=\defGraph{\Nodes}{\Edges}$ with nodes~$\Nodes\subset\N$ and edges~$\Edges = \EdgesFixed \cup \EdgesUncertain\,\subseteq\,\Nodes\times\Nodes$
consisting of fixed edges~$\EdgesFixed$ and uncertain edges~$\EdgesUncertain$, and uncertain node features $\inputSet\subset\R^{\numNodes\times\numFeat_0}$,
we compute an enclosure of the output $\outputSet$ using \cref{alg:graph-neural-network-verification}.
The specification $\mathcal{S}$ is then verified as in previous works~\citep{kochdumper2022open,ladner2023automatic}.

We demonstrate our approach on three benchmark graph datasets:
The first two, \textit{Enzymes} and \textit{Proteins},
represent protein structures tailored for the task of protein function classification~\citep{schomburg2004brenda,bogwardt2005protein}.
The third dataset, \textit{Cora},
represents a citation network with several classes of publications \citep{yang2016revisiting, mccallum2000automating}.
The main properties of each dataset are summarized in \cref{tab:dataset_overview}.
All graph neural networks considered here are as described in \cref{alg:graph-neural-network-verification},
where we have three message-passing steps ($\numMPsteps=3$) and $\tanh$ activation unless stated otherwise.
The number of input and output neurons depends on the number of node features and classes of the dataset~(\cref{tab:dataset_overview}),
respectively, and the networks have $64$ neurons per node in hidden layers.

To evaluate our approach on the datasets, we perturb the node features and graph structure as follows:
We normalize all node features and perturb them using the same perturbation radius $\pertRadius\in\R_+$ on all features.
Given a flattened input $\vec{X}\in\R^{\numNodes\cdot\numFeat_0}$, our input set then becomes
\begin{equation}
    \label{eq:input-set-construction}
    \vec{\inputSet} = \defPZ{\vec{X}}{\pertRadius I_{\numNodes\cdot\numFeat_0}}{[\ ]}{I_{\numNodes\cdot\numFeat_0}}\subset\R^{\numNodes\cdot\numFeat_0},
\end{equation}
which we can reshape to a matrix polynomial zonotope $\inputSet\subset\R^{\numNodes\times\numFeat_0}$.
The partitioning of the edges into fixed edges~$\EdgesFixed$ and uncertain edges~$\EdgesUncertain$ is as follows:
To preserve the structure of the input graphs,
the set of fixed edges $\EdgesFixed$ always contains a spanning tree of the graph,
and we make the presence of some remaining edges unknown and, thus, part of the uncertain edges $\EdgesUncertain$ depending on the respective experiment.
The spanning tree is constructed using a breadth-first search, with the root node being the one with the highest degree (e.g., node \cnode{1} in \cref{fig:example_graph_msp}).

In our experiments, we perturb the edges independently of each other.
Thus, the resulting uncertain adjacency matrix $\adjMatSet$ is constructed analogous to~\eqref{eq:input-set-construction}.
However, one can also construct an uncertain adjacency matrix with dependencies.
For example, consider an undirected graph with three nodes.
We can model that node \cnode{1} has to be connected with either \cnode{2} or \cnode{3} (xor-relation) and a fixed edge \cedge{2}{3} with the following matrix polynomial zonotope:
\begin{equation}
    \adjMatSet = \defPZ[.]{\cmatrix{0 & 0.5 & 0.5 \\ 0.5 & 0 & 1 \\ 0.5 & 1 & 0}}{\cmatrix{0 & -0.5 & 0.5 \\ -0.5 & 0 & 0 \\ 0.5 & 0 & 0}}{\cmatrix{\ \ \\\ \\\ }}{\cmatrix{1}}
\end{equation}
Due to the different signs within the generator, either the edge \cedge{1}{2} or the edge \cedge{1}{3} is present.
Similarly, other constraints can be modeled in the set representation itself as well.
This can be used to model budget constraints as was done in related work \citep[Sec.~4.1]{bojchevski2019certifiable}.
Please note that \cref{alg:graph-neural-network-verification} remains unchanged as only the uncertain adjacency matrix $\adjMatSet$ (and thus the uncertain message passing $\messagePassSet$ via \cref{prop:uncertain-message-passing}) is updated.

We evaluate all experiments over $50$ runs with graphs sampled from the respective dataset.
Due to the exponential time complexity of the enumeration method, we repeated these runs only $20$ times.
We use a rather small perturbation radius $\pertRadius=0.001$ on the Enzymes and Proteins dataset
as we have found that the graph neural networks are not robust for larger radii, and counterexamples can easily be found.

While our approach is able to verify many instances, not all instances are verified.
However, not all instances are indeed verifiable.
Thus, we provide an upper bound of verifiable instances by extracting counterexamples of a given instance.
This is achieved by enumerating all possible graphs and applying a gradient-based adversarial attack (fast gradient sign method \citep{goodfellow2015explaining}) on the node features for each graph~\cite{gunnemann2022graph}.
For large graphs with many uncertain edges, it is not feasible to enumerate all possible graphs.
Thus, we only check at most $10,000$ graphs for each instance, and select those which differ the most from the original graph.

\subsection{Ablation Study}

We analyze the approximation errors induced by each component of our approach in more detail.
In particular, we investigate (i) the size of the approximation error during the enclosure of the inverse square root function while computing the uncertain message passing (\cref{prop:uncertain-message-passing})
and (ii) the outer approximation induced by different order reduction techniques to limit the number of generators.

\subsubsection{Approximation Error of Inverse Square Root Function}
\label{app:invsqrtroot}

Let us first analyze the outer approximation induced by the enclosure of the inverse square root function.
Please recall that the input corresponds to the degree of a node in $\tilde{D}_\text{diag}$~\eqref{eq:degree-matrix},
and the output to the respective entry in $\tilde{D}_\text{diag}^{-\frac{1}{2}}$~\eqref{eq:enclosure-degMat-diag}.
Enclosures can be obtained by approximating the function using a polynomial and computing the corresponding approximation error (\cref{prop:image-enclosure}).
Tighter enclosures can be obtained using higher-order polynomials at the cost of additional generators~\citep{ladner2023automatic}.
Fortunately, higher-order polynomials are usually not required for larger graphs as the degrees of the nodes become much larger
and a linear polynomial can approximate the function quite well.
We illustrate this in \cref{fig:example_invsqrtroot_multiple}, where the approximation error becomes smaller with increasing node degree despite larger uncertainty.
For smaller inputs, we illustrate in \cref{fig:example_invsqrtroot_orders} how higher-order polynomials return tighter enclosures.

\begin{figure}
    \centering
    \tikzsetnextfilename{./figures/invsqrtroot/example_invsqrtroot_multiple}%
    \input{./figures/invsqrtroot/example_invsqrtroot_multiple.tikz}%

    \caption{Approximation errors of inverse square root enclosure at different input domains.
        The x-axis corresponds to the degree of a node in $\tilde{D}_\text{diag}$~\eqref{eq:degree-matrix},
        and the y-axis to the respective entry in $\tilde{D}_\text{diag}^{-\frac{1}{2}}$~\eqref{eq:enclosure-degMat-diag}.
        Please note that for nodes with larger degrees, linear approximations are usually sufficient to obtain tight enclosures.
    }
    \label{fig:example_invsqrtroot_multiple}
\end{figure}

\begin{figure}
    \centering
    \tikzsetnextfilename{./figures/invsqrtroot/example_invsqrtroot_orders}%
    \definecolor{mycolor1}{rgb}{0.00000,0.44700,0.74100}%
\definecolor{mycolor2}{rgb}{0.85000,0.32500,0.09800}%
\definecolor{mycolor3}{rgb}{0.92900,0.69400,0.12500}%
\begin{tikzpicture}
\footnotesize
\pgfplotsset{
plotstyle1/.style={color=black},
plotstyle2/.style={color=mycolor1},
plotstyle3/.style={color=mycolor1, dashed, forget plot},
plotstyle4/.style={color=white!70!black, dashed, forget plot},
/.style={},
plotstyle5/.style={color=mycolor2},
plotstyle6/.style={color=mycolor2, dashed, forget plot},
plotstyle8/.style={color=mycolor3},
plotstyle9/.style={color=mycolor3, dashed, forget plot}
}
\def\rows{1}
\def\cols{3}
\def\horzsep{1cm}
\def\basepath{./figures/invsqrtroot/}

\begin{groupplot}[%
group style={rows = \rows, columns = \cols, horizontal sep = \horzsep},
scale only axis,
width=1/\cols*\textwidth -\horzsep,
legend style={legend columns=4,legend to name=legendname, legend cell align=left,/tikz/every even column/.append style={column sep=0.4cm},legend image post style={xscale=0.5}}
]
\nextgroupplot[xmin=1,xmax=6,ymin=0.2,ymax=1,xlabel={Input $x$},ylabel={Output},title={(a) Linear polynomial}]
\input{\basepath example_invsqrtroot_orders_11.tikz}
\coordinate (top) at (rel axis cs:0,1);
\nextgroupplot[xmin=1,xmax=6,ymin=0.2,ymax=1,xlabel={Input $x$},title={(b) Quadratic polynomial}]
\input{\basepath example_invsqrtroot_orders_12.tikz}
\nextgroupplot[xmin=1,xmax=6,ymin=0.2,ymax=1,xlabel={Input $x$},title={(c) Cubic polynomial}]
\input{\basepath example_invsqrtroot_orders_legends.tikz}
\input{\basepath example_invsqrtroot_orders_13.tikz}
\coordinate (bot) at (rel axis cs:1,0);
\end{groupplot}
\path (top|-current bounding box.south)--coordinate(legendpos)(bot|-current bounding box.south);
\node at([yshift=-4ex,xshift=-3ex]legendpos) {\pgfplotslegendfromname{legendname}};

\end{tikzpicture}

    \caption{Approximation errors of inverse square root enclosure for different orders of the approximation polynomial.
        Higher-order polynomials enable tight enclosures at the cost of additional generators.
    }
    \label{fig:example_invsqrtroot_orders}
\end{figure}

\begin{table}
    \centering
    \caption{
        Influence of approximation errors in uncertain message-passing computations depending on the number of steps $\numMPsteps$, swith $\volumeRel$ indicating the relative volume of the output set~\eqref{eq:volume-rel}.
    }
    \label{tab:invsqrtroot-enzymes-mps}
    \begin{tabular}{l c R@{$\pm$}L C R@{$\pm$}L}
        \toprule
        \text{Dataset} & $\numMPsteps$ & \multicolumn{2}{c}{$\volumeRel$} & \text{Verified instances [\%]} & \multicolumn{2}{c}{\text{Verification time [s]}} \\
        \midrule
        \multirow{10}*{Enzymes} & $1$  & 1.0336 & 0.0169 & 92.00 & 3.36   & 3.59   \\
        & $2$  & 1.0573 & 0.0441 & 94.00 & 16.20  & 13.90  \\
        & $3$  & 1.1310 & 0.1354 & 90.00 & 53.06  & 42.58  \\
        & $4$  & 1.1417 & 0.1460 & 90.00 & 115.63 & 72.96  \\
        & $5$  & 1.2191 & 0.1442 & 74.00 & 184.58 & 101.29 \\
        & $6$  & 1.2756 & 0.1616 & 82.00 & 179.92 & 76.00  \\
        & $7$  & 1.5455 & 0.7611 & 64.00 & 276.42 & 124.37 \\
        & $8$  & 1.8605 & 1.3858 & 72.00 & 258.53 & 102.45 \\
        & $9$  & 2.1041 & 1.2579 & 58.00 & 349.20 & 132.92 \\
        & $10$ & 2.0582 & 1.7807 & 54.00 & 398.55 & 148.96 \\
        \bottomrule
    \end{tabular}
\end{table}

Finally, let us evaluate the influence of the enclosure of the inverse square root function during the uncertain message-passing computation (\cref{prop:uncertain-message-passing}) on the output set $\outputSet$.
Ideally, we would like to measure the relative volume of the obtained output set $\outputSet$ with respect to the volume of the exact output set $\outputSetExact$.
However, computing $\outputSetExact$ is computationally infeasible~\citep{katz2017reluplex}.
We approximate the volume of $\outputSetExact$ by ignoring the approximation errors to obtain an approximative output set $\outputSetApprox$,
i.e., do not perform step 6 in \cref{fig:nonlinear-main-steps}.
As the volume of a polynomial zonotope is also hard to compute, we use the volume of the enclosing zonotope~\citep[Prop.~3.1.14]{kochdumper2022extensions} instead.
Then, the relative volume $\volumeRel$ of $\outputSet$ with respect to $\outputSetApprox$ is computed as follows \citep[Sec.~IV-A]{kopetzki2017methods}:
\begin{equation}
    \label{eq:volume-rel}
    \volumeRel(\outputSet,\outputSetApprox) = \left( \frac{\opVolume{\outputSet}}{\opVolume{\outputSetApprox}} \right)^{\sfrac{1}{\numNeurons_\numLayers}},
\end{equation}
where $\volumeRel$ is normalized by the number of output dimensions $\numNeurons_\numLayers$ for better comparability between all datasets.
The closer $\volumeRel$ is to $1$, the less contribute the approximation errors of the inverse square root enclosure to the final output set.
We present $\volumeRel$ averaged over $50$ graph inputs for ten models trained on the Enzymes dataset with up to $10$ message passing steps $\numMPsteps$ in \cref{tab:invsqrtroot-enzymes-mps}.
While the approximation errors accumulate over the layers of a graph neural network,
the overall contribution to the output set is modest and other factors such as the uncertainty in the node features are more dominant.
Thus, most uncertain input graphs remain verifiable even for many message passing steps $\numMPsteps$ with reasonable verification times.

\subsubsection{Approximation Error of Order Reduction Techniques}
\label{app:order-reduction}

Let us also take a closer look at different order reduction techniques.
Order reduction is applied to remain computationally feasible for large sets with many generators at the cost of additional outer approximation,
where the order of a polynomial zonotope $\mathcal{PZ}\subset\R^n$ with $h$ generators is defined as $\rho=\sfrac{h}{n}$.
Please note that barely any order reduction techniques exist for polynomial zonotopes~\citep{ladner2024exponent},
and order reduction is achieved by applying them on the zonotope enclosure of the smallest generators instead \citep[Prop.~3.1.39]{kochdumper2022extensions}.
We evaluate two techniques here: box enclosure (\texttt{Box}) \citep[Sec.~II-A]{kopetzki2017methods} and one based on principal component analysis (\texttt{PCA}) \citep[Sec.~III-A]{kopetzki2017methods},
which are visualized in \cref{fig:example_order_reduction}.
Additionally, we combine them with a preprocessing step (\texttt{ExpRelax}) for polynomial zonotopes to reduce the outer approximation induced by the zonotope enclosure \citep{ladner2024exponent},
which is based on relaxing the exponents of the dependent factors $\alpha_k$ of a polynomial zonotope (\cref{def:pZ}).

\begin{figure}
    \centering
    \tikzsetnextfilename{./figures/orderreduction/example_order_reduction}%
    \definecolor{mycolor1}{rgb}{0.00000,0.44700,0.74100}%
\definecolor{mycolor2}{rgb}{0.85000,0.32500,0.09800}%
\definecolor{mycolor3}{rgb}{0.92900,0.69400,0.12500}%
\begin{tikzpicture}
\footnotesize
\pgfplotsset{
plotstyle1/.style={color=mycolor1},
plotstyle2/.style={color=mycolor2},
plotstyle3/.style={color=mycolor3},
/.style={}
}
\def\rows{1}
\def\cols{2}
\def\horzsep{1cm}
\def\basepath{./figures/orderreduction/}

\begin{groupplot}[%
group style={rows = \rows, columns = \cols, horizontal sep = \horzsep},
scaled ticks=false,
width=1/(\cols+0.25)*\textwidth -\horzsep,
legend style={legend columns=3,legend to name=legendname, legend cell align=left,/tikz/every even column/.append style={column sep=0.5cm}},
ylabel near ticks,
xlabel near ticks,
]
\nextgroupplot[xmin=-1.4478144077337e-05,xmax=1.44140915549296e-05,ymin=-0.00299,ymax=0.00349,xlabel={$x_{(1)}$},ylabel={$x_{(2)}$},xticklabel=\empty,yticklabel=\empty,title={(a)}]
\input{\basepath example_order_reduction_11.tikz}
\coordinate (top) at (rel axis cs:0,1);
\nextgroupplot[xmin=-2.92404074271782e-07,xmax=3.12422593688975e-07,ymin=-6.64e-08,ymax=1.064e-07,xlabel={$x_{(5)}$},ylabel={$x_{(6)}$},xticklabel=\empty,yticklabel=\empty,title={(b)}]
\input{\basepath example_order_reduction_legends.tikz}
\input{\basepath example_order_reduction_12.tikz}
\coordinate (bot) at (rel axis cs:1,0);
\end{groupplot}
\path (top|-current bounding box.south)--coordinate(legendpos)(bot|-current bounding box.south);
\node at([yshift=-3ex]legendpos) {\pgfplotslegendfromname{legendname}};

\end{tikzpicture}

    \caption{
        Visualization of different order reduction techniques applied on a $7$-dimensional output set $\outputSet$ obtained on the Cora dataset.
        The relative volume with respect to $\outputSet$ is $\volumeRel=3.4505$ for the box enclosure and $\volumeRel=1.0638$ for the PCA enclosure.
    }
    \label{fig:example_order_reduction}
\end{figure}

We apply each order reduction technique to $50$ output sets $\outputSet$ obtained on all datasets and networks.
In \cref{tab:orderereduction-relative-volume}, we show the relative volume $\volumeRel(\outputSetRed,\outputSet)$ of each reduced set $\outputSetRed$ with respect to $\outputSet$ for different orders $\rho$.
The box enclosure obtains tighter results on the Enzymes and Proteins dataset, wheras the PCA enclosure is tighter on the Cora dataset.
For all datasets and order reduction techniques, the \texttt{ExpRelax} preprocessing slightly improves the result at the cost of additional computation time.
However, we want to stress that the chosen orders $\rho$ are very small and one usually uses higher values for $\rho$ during the verification of neural networks,
for which the difference between the individual techniques becomes less noticeable.
We also notice that for some values in \cref{tab:orderereduction-relative-volume}, $\volumeRel$ is smaller than $1$,
which might indicate that the respective order reduction is not outer-approximative.
However, as mentioned above, we cannot compute the volume of a polynomial zonotope directly and rather compute the volume of the enclosing zonotope instead.
Applying the respective order reduction seems to improve this zonotope enclosure, resulting in a smaller volume for $\outputSetRed$ than $\outputSet$.

\begin{table}
    \centering
    \caption{
        Relative volume $\volumeRel$ for different order reduction techniques, where the respective output set $\outputSet$ is reduced to order $\rho$.
    }
    \label{tab:orderereduction-relative-volume}
    \begin{tabular}{l c l R@{$\pm$}L R@{$\pm$}L R@{$\pm$}L R@{$\pm$}L}
        \toprule
        & & & \multicolumn{4}{c}{$\rho=2$} & \multicolumn{4}{c}{$\rho=1.5$} \\
        \cmidrule(lr{.75em}){4-7} \cmidrule(lr{.75em}){8-11}
        \text{Dataset} & $\numMPsteps$ & \text{Order Reduction} & \multicolumn{2}{c}{$\volumeRel$} & \multicolumn{2}{c}{\text{Time [s]}}& \multicolumn{2}{c}{$\volumeRel$} & \multicolumn{2}{c}{\text{Time [s]}} \\
        \midrule
        \multirow{4}*{Enzymes}  & \multirow{4}*{$3$} & \texttt{Box}                   & 1.0010 & 0.0017 & 0.039 & 0.034 & 1.0015 & 0.0019 & 0.032 & 0.019 \\
        &                    & \texttt{Box}+\texttt{ExpRelax} & 0.9997 & 0.0027 & 1.456 & 0.445 & 1.0003 & 0.0029 & 1.401 & 0.216 \\
        &                    & \texttt{PCA}                   & 1.1465 & 0.0868 & 0.036 & 0.015 & 1.1450 & 0.0947 & 0.031 & 0.013 \\
        &                    & \texttt{PCA}+\texttt{ExpRelax} & 1.1461 & 0.0871 & 1.534 & 0.409 & 1.1421 & 0.0981 & 1.453 & 0.348 \\
        \midrule
        \multirow{4}*{Proteins} & \multirow{4}*{$3$} & \texttt{Box}                   & 1.0045 & 0.0084 & 0.007 & 0.008 & 1.0068 & 0.0108 & 0.007 & 0.005 \\
        &                    & \texttt{Box}+\texttt{ExpRelax} & 0.9954 & 0.0156 & 0.075 & 0.028 & 0.9977 & 0.0163 & 0.074 & 0.031 \\
        &                    & \texttt{PCA}                   & 1.0287 & 0.1073 & 0.007 & 0.006 & 1.0410 & 0.1013 & 0.007 & 0.004 \\
        &                    & \texttt{PCA}+\texttt{ExpRelax} & 1.0276 & 0.1003 & 0.090 & 0.061 & 1.0295 & 0.0960 & 0.082 & 0.038 \\
        \midrule
        \multirow{4}*{Cora}     & \multirow{4}*{$2$} & \texttt{Box}                   & 1.1295 & 0.1732 & 0.020 & 0.007 & 1.7904 & 1.0787 & 0.007 & 0.003 \\
        &                    & \texttt{Box}+\texttt{ExpRelax} & 1.1260 & 0.1741 & 0.100 & 0.179 & 1.7815 & 1.0836 & 0.071 & 0.099 \\
        &                    & \texttt{PCA}                   & 1.0398 & 0.0632 & 0.023 & 0.018 & 1.0826 & 0.1193 & 0.008 & 0.006 \\
        &                    & \texttt{PCA}+\texttt{ExpRelax} & 1.0366 & 0.0580 & 0.122 & 0.190 & 1.0796 & 0.1178 & 0.086 & 0.118 \\
        \midrule
        \multirow{4}*{Cora}     & \multirow{4}*{$3$} & \texttt{Box}                   & 1.1594 & 0.1570 & 0.030 & 0.029 & 2.0779 & 1.0363 & 0.015 & 0.012 \\
        &                    & \texttt{Box}+\texttt{ExpRelax} & 1.1497 & 0.1591 & 0.591 & 0.813 & 2.0661 & 1.0447 & 0.516 & 0.799 \\
        &                    & \texttt{PCA}                   & 0.9521 & 0.0958 & 0.035 & 0.028 & 0.9590 & 0.1208 & 0.018 & 0.014 \\
        &                    & \texttt{PCA}+\texttt{ExpRelax} & 0.9457 & 0.0933 & 0.597 & 0.857 & 0.9541 & 0.1163 & 0.504 & 0.658 \\
        \bottomrule
    \end{tabular}
\end{table}

\end{document}